\newtheorem{theorem}{Theorem}
\newtheorem{lemma}{Lemma}
\newtheorem{proposition}{Proposition}
\newtheorem{remark}{Remark}
\newtheorem{conj}{Conjecture}
\newcommand{\w}{{\textbf w}}
\renewcommand{\v}{{\textbf v}}
\renewcommand{\u}{{\textbf u}}
\begin{document}

\title{Almost Uniform Sampling From Neural Networks}

\author{%
  \IEEEauthorblockN{Changlong Wu\qquad Narayana Prasad Santhanam}
  \IEEEauthorblockA{University of Hawaii at Manoa, Honolulu, HI USA\\
                    Email: \{wuchangl, nsanthan\}@hawaii.edu}
}

\maketitle
\renewcommand{\reals}{{\mathbb{R}}}
\begin{abstract}
  Given a length $n$ sample from $\reals^d$ and a neural
  network with a fixed architecture with $W$ weights, $k$ neurons, linear threshold
  activation functions, and binary outputs on each neuron, we study
  the problem of uniformly sampling from all possible labelings on the
  sample corresponding to different choices of weights. We provide an
  algorithm that runs in time polynomial both in $n$ and $W$ such that
  any labeling appears with probability at least $\left(\frac{W}{2ekn}\right)^W$ for $W<n$. For a
  single neuron, we also provide a random walk based algorithm that
  samples exactly uniformly.
\end{abstract}
\newcommand{\x}{{\mathbf x}}
\section{Introduction}
Consider a sample $\x_1\upto \x_n$, where $\x_i\in\reals^d$. We have a
feedforward neural network with a given architecture (but the weights
are unknown). Each sample point $\x_i$ has binary labels, either +1 or
-1. Sauer's lemma provides an upper bound on the number of possible
labelings that could be generated by a hypothesis class (the
\emph{growth function}) in terms of the VC dimension of the hypothesis
class.

We are interested in hypothesis classes corresponding to neural
networks with a fixed architecture but unspecified weights. While it
is hard to exactly specify the VC dimension of this class, upper
bounds on the VC dimension and the growth function are easily derived,
see for example~\cite[Section 6.2]{anthony2009neural}. The growth
function for a feedforward, linear threshold network is upper bounded
by $(enk/W)^W$, where $k$ is the number of neurons in the network, and
$W$, the number of weights.

Our goal in this paper is to generate labels of the sample uniformly
at random from the set of all possible labelings that a given
feedforward architecture can provide. We obtain a polynomial time (in
both the number of samples and the size of the network), near uniform
sampling from arbitrary feedforward networks.  In the special case of
a single neuron, we also provide a random walk based algorithm for
perfectly uniform sampling, and with polynomial mixing time for the
random walk.

Aside from the theoretical interest in generating labelings, we are
also motivated by questions in property testing. Namely, we want to
estimate the statistics of all labelings generated by a given
architecture.  As an example, we may want to find out the the
probability that a subset of samples are all labeled the same if all
labels were generated at random from the given architecture.  In
future work, we intend to leverage these insights into better
initializations of neural networks while training.

We obtain these results by developing insights on random walks between
chambers of intersecting hyperplanes in high dimensions.  This is a
well studied area, see for example~\cite{stanley2004introduction}.
General arrangements of these hyperplanes intersect in complicated
ways, as in our problem, and random walks between these chambers is
nontrivial. It is common to visualize the geometry of these
arrangments by means of a \emph{chamber graph}, see Chapter 7 of
\cite{ovchinnikov2011graphs} for a synopsis of such chamber graphs.
Random walks over hyperplane arrangements appears in contexts quite
different from ours. For example, Bidigere, Hanlon and Rockmore
modeled card shuffling in \cite{bidigare1999combinatorial}, with such
random walks. Some other applications are
in~\eg~\cite{brown1998random,athanasiadis2010functions,pike2013eigenfunctions,bjorner2008random}.

The statistics of the random walks considered in the references
above is different from ours. Typically, these authors provide an
explicit expression to estimate the eigenvalues of the random walk to
bound the mixing time. In our paper, we use conductance to
understand the mixing properties of our random walk as
in~\cite{levin2017markov} and~\cite{berestycki2016mixing}.

\ignore{The more general problem of uniformly sampling geometric objects is
extensively studied in Markov Chain Monte Carlo (MCMC) literature,
e.g. Dyer, Frieze and Kannan's work~\cite{dyer1991random} on
estimating the volume of high dimensional convex bodies.  }

\section{Setup and Notations}
\newcommand{\W}{{\textbf W}} We consider a feed-forward linear
threshold neural network with $L$ layers.  The input to the network is
$d-$dimensional and there is a single binary output label. Namely,
i.e. any neuron with parameters $\textbf{w},b$, ($\w\in \reals^d$,
$b\in\reals$) outputs $\sigma(\textbf{x}^T\textbf{w}+b)$ on an input
$\x\in\reals^d$, where $\sigma(u)=1$ if $u\ge 0$ and $\sigma(u)=0$
otherwise. In subsequent work, we extend our results to more general
activation functions.

Let $N$ be the graph of the feedforward neural network with a fixed
architecture and $W$ different parameters (the weights and thresholds
put together). Let $\W\in \reals^W$, and let $N_{\W}$ be the neural
network which assigns the parameters of $N$ to be $\textbf{W}$. For
any given architecture $N$, let
$f_{\W}:\textbf{x}\in\mathbb{R}^d\to\{0,1\}$ be the function expressed
by $N_{\W}$.

The vectors $\textbf{x}\in\mathbb{R}^d$ are the input and
$f_{\W}(\textbf{x})$ are the labels assigned to $\textbf{x}$.
For a length $n$ sample
$X=\{\textbf{x}_1,\cdots,\textbf{x}_n\in \mathbb{R}^d\}$, let
\[
  S_X=\{(f_{\W}(\textbf{x}_1),\cdots,f_{\W}(\textbf{x}_n))\mid
  \textbf{W}\in \mathbb{R}^W\}
\]
be the set of all labelings that can be generated on $X$ by the
architecture $N$. Note that the set $S_X\subset \sets{0,1}^n$ and for
$W<n$,~\cite[Section 6.2]{anthony2009neural}
(or~\cite{shalev2014understanding})
\[
|S_X|\le \Paren{\frac{enk}W}^W.
\]
When $W\ge n$, $|S_X|\le 2^n$, or $X$ is potentially shattered.

\noindent
\textbf{Problem} For a given architecture $N$ and data $X$, how
can we randomly sample from $S_X$, in time polynomial in both $n$ and
$W$, such that any labeling $v\in S_X$ appears with
probability at least $\Omega(1/|S_X|)$?

\paragraph{Background}
A hyperplane in $\reals^d$ (or a hyperplane
in $d$ dimensions) is the set of all points $\w\in\reals^d$ 
satisfying $\x^T \w=0$ for some fixed vector $\x\in\reals^d$. 
Let $N$ be a single neuron with input dimension $d$. As before,
$X=\Sets{\x_1\upto \x_n\in \reals^d}$ is a length $n$ sample.

Let $\w\in \reals^d$ and $b\in\reals$.  Physically, the vector in
$d+1$ dimensions, $(\w, b)\in\reals^{d+1}$ defines the parameters of
the single neuron $N$.  For each sample point
$\textbf{x}_i\in \mathbb{R}^d$, define $P_i$ to be the hyperplane in
the parameter space $\reals^{d+1}$:
\[
\textbf{x}_i^T\textbf{w}+b=0.
\] 
We start with a visualization from~\cite{anthony2009neural}.
\begin{theorem} All parameter vectors that belong to the same connected
  component of $\mathbb{R}^{d+1}\backslash \bigcup_i P_i$ label $X$ in
  the same way. Conversely, different components have different
  labelings on $X$.
\end{theorem}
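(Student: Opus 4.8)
The plan is to prove the two directions separately, in both cases leaning on the fact that each map $g_i\colon(\w,b)\mapsto \x_i^T\w+b$ is affine, hence continuous, in the parameter vector, and that its zero set is exactly $P_i$.

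For the forward implication I would fix a connected component $C$ of $\reals^{d+1}\setminus\bigcup_i P_i$ and two parameter vectors in it, join them by a continuous path $\gamma\colon[0,1]\to C$, and observe that $t\mapsto g_i(\gamma(t))$ is continuous and never zero, since $\gamma$ avoids $P_i$. By the intermediate value theorem it then keeps a constant sign on $[0,1]$, so $\sigma(g_i(\gamma(0)))=\sigma(g_i(\gamma(1)))$; ranging over $i$ shows the labeling $(f_\W(\x_1),\dots,f_\W(\x_n))$ agrees at the two endpoints, so it is constant on $C$. Note that the tie-break convention $\sigma(0)=1$ never enters, precisely because no $g_i$ vanishes anywhere on $C$.

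For the converse I would show each component coincides with a single sign region. Writing $R_\epsilon=\{(\w,b):\epsilon_i g_i(\w,b)>0 \text{ for all } i\}$ for $\epsilon\in\{+,-\}^n$, each $R_\epsilon$ is an intersection of open half-spaces, so it is open and convex, hence connected, and the nonempty $R_\epsilon$ partition the complement $\reals^{d+1}\setminus\bigcup_i P_i$. Since $\operatorname{sign}(g_i(\cdot))$ is locally constant off $P_i$, the sign vector is constant on the connected set $C$, so $C\subseteq R_\epsilon$ for a unique $\epsilon$; as $C$ is a maximal connected subset of the complement and $R_\epsilon$ is a connected subset of the complement containing it, in fact $C=R_\epsilon$. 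Thus distinct components carry distinct sign vectors, and on a component $f_\W(\x_i)=\sigma(g_i(\w,b))$ equals $1$ exactly when $\epsilon_i=+$, so the labeling is just a relabeling of the sign vector and distinct components give distinct labelings.

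The argument is essentially routine; the only points that need care are checking that the $g_i$ are nowhere zero on a component — which is what makes $\operatorname{sign}(g_i)$, and hence $\sigma(g_i)$, locally constant and lets us ignore the $\sigma(0)=1$ convention — and the convexity-gives-connectedness observation that pins each component down as exactly one sign region.
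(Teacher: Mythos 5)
Your proof is correct and complete. The paper itself offers no proof of this theorem---it is quoted as a known ``visualization'' from Anthony and Bartlett's book---so there is nothing to compare against; your argument (constant sign along a path avoiding $\bigcup_i P_i$ for the forward direction, and the identification of each component with a convex, hence connected, sign region $R_\epsilon$ for the converse) is the standard one and handles the only delicate points correctly, namely path-connectedness of components of an open set in $\reals^{d+1}$ and the irrelevance of the $\sigma(0)=1$ convention off the hyperplanes.
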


We recall a few standard terms regarding hyperplane arrangements
formed by $P_1\upto P_n$.
\begin{list}{$\bullet$}{\setlength{\listparindent}{0pt}\setlength{\leftmargin}{10pt}}

\item The connected components in
  $\mathbb{R}^{d}\backslash \bigcup_{i=1}^nP_i$ are called
  \emph{chambers} (or regions).

\item The \emph{chamber graph} is constructed as follows: assign a
vertex to every chamber. Two vertices are connected if their associated 
chambers share a common face. 

\item Any hyperplane arrangement is \emph{centered} if the
  intersection of the component hyperplanes contains the origin. 
  In our case, $\bigcap_{i=1}^nP_i$ always contains the origin,
  \ie the samples generate a \emph{centered} arrangement
  in the parameter space.

\item A collection of $n$ centered hyperplanes in $\reals^{d+1}$ is
  in \emph{general} position, if for all $k\le d+1$, every
  intersection of $k$ distinct hyperplanes forms a $d+1-k$-dimension
  linear space, and any intersection of more than $d+1$ hyperplanes is
  contains only the origin. Randomly chosen
  planes are in general position almost surely.
\end{list}

\paragraph{Psuedo polynomial optimal training algorithm}
A theoretically useful framework was introduced in Theorem 4.1
of~\cite{arora2016understanding} for ReLU networks, where the network
size $W$ is treated as a constant, and we look at the dependency
purely on the sample size $n$ (thereby treating $n^W$ as a
polynomial).

We note that our near-uniform polynomial time sampling procedure
implies a probabilistic, psuedo-polynomial training algorithm that
attains the global minimum for any feedforward linear threshold neural
network. This implication is immediate from the coupon collector
problem---since given any confidence, generating at most
$O(n^W\log n)$ samples guarantees that we have seen every possible
labeling that can be produced.

\section{Properties of hyperplane arrangements}
We summarize a few useful properties of hyperplane arrangments that
we will use in our arguments in the paper.
\begin{proposition}[{\cite[Theorem 3.1]{anthony2009neural}}]
  The number of chambers in a centered hyperplane arrangement formed
  by $n$ hyperplanes in $d$ dimensions in the general position is
$$2\sum_{i=0}^{d-1}\binom{n-1}{i}.$$
\end{proposition}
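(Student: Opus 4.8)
The plan is to prove this by induction on the number $n$ of hyperplanes, adding one hyperplane at a time. Write $r(n,d)$ for the number of chambers of a centered, general-position arrangement of $n$ hyperplanes in $\reals^d$; general position makes this number well defined (it does not depend on the particular hyperplanes chosen). The whole argument rests on the recursion
\[
  r(n,d) = r(n-1,d) + r(n-1,d-1),
\]
valid for $n\ge 2$ and $d\ge 2$, together with the base values $r(1,d)=2$ and $r(n,1)=2$.

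To set up the recursion I would fix the arrangement $P_1,\dots,P_{n-1}$ and examine what happens when $P_n$ is added. Each chamber of $\reals^d\setminus\bigcup_{i<n}P_i$ is convex, so $P_n$ either misses its interior (leaving the chamber intact) or crosses it, in which case it splits the chamber into exactly two; moreover, by convexity, a chamber that is split meets $P_n$ in a single connected piece, so newly created chambers correspond bijectively to the $(d-1)$-dimensional pieces into which $P_n$ itself is cut by the traces $P_n\cap P_i$, $i<n$. Identifying $P_n$ with $\reals^{d-1}$, these $n-1$ traces are hyperplanes all passing through the origin, hence form a centered $(n-1)$-hyperplane arrangement in $\reals^{d-1}$. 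The step I expect to require the most care is verifying that this trace arrangement is again in general position: an intersection $P_n\cap P_{i_1}\cap\dots\cap P_{i_k}$ of $k$ traces is the intersection of $k+1$ distinct original hyperplanes, so it has dimension $d-(k+1)=(d-1)-k$ when $k+1\le d$ and equals $\{0\}$ otherwise — exactly what general position in $\reals^{d-1}$ demands (this also forces the traces to be distinct when $d\ge 3$). Granting this, the number of chambers of $P_n$ is $r(n-1,d-1)$, which yields the recursion. The corner case $d=2$, where all traces collapse to the origin of $\reals^1$, still obeys $r(n-1,1)=2$ since $\reals\setminus\{0\}$ has two components, so nothing special is needed.

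The base cases are immediate: one centered hyperplane cuts $\reals^d$ into two half-spaces, so $r(1,d)=2$; and in $\reals^1$ every centered hyperplane is $\{0\}$, so $r(n,1)=2$. Both agree with $2\sum_{i=0}^{d-1}\binom{n-1}{i}$ evaluated at $n=1$ and at $d=1$ respectively (using $\binom{0}{i}=0$ for $i\ge 1$). For the inductive step I would substitute the claimed formula for $r(n-1,\cdot)$ into the recursion and check, applying Pascal's identity $\binom{n-1}{i}=\binom{n-2}{i}+\binom{n-2}{i-1}$ termwise, that $2\sum_{i=0}^{d-1}\binom{n-1}{i}$ equals $2\sum_{i=0}^{d-1}\binom{n-2}{i}+2\sum_{i=0}^{d-2}\binom{n-2}{i}$; the only thing to notice is that the index-shifted sum $\sum_{i=0}^{d-1}\binom{n-2}{i-1}$ collapses to $\sum_{i=0}^{d-2}\binom{n-2}{i}$. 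That closes the induction.

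The one genuinely non-routine ingredient is the inheritance of general position by the trace arrangement on $P_n$; the rest is convexity of chambers plus binomial bookkeeping. A more machinery-heavy alternative would be to identify the intersection lattice of the arrangement with the rank-$d$ truncation of the Boolean lattice $B_n$, compute its characteristic polynomial, and invoke Zaslavsky's theorem that the number of regions of a real arrangement equals $(-1)^d\chi(-1)$; but the one-hyperplane-at-a-time induction above is shorter and entirely self-contained.
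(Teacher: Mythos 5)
Your proof is correct. The paper itself offers no argument for this proposition---it is imported verbatim as Theorem 3.1 of \cite{anthony2009neural}---so there is no in-paper proof to compare against; what you have written is the standard deletion--restriction induction that underlies the cited result (and is the same ``inductive approach for computing hyperplane partition number'' from Stanley's notes that the paper invokes elsewhhere to motivate Algorithm RS). Your recursion $r(n,d)=r(n-1,d)+r(n-1,d-1)$, the convexity argument identifying newly created chambers with chambers of the trace arrangement on $P_n$, and the verification that the $n-1$ traces form a centered, general-position arrangement in $\reals^{d-1}$ are all sound, the base cases $r(1,d)=r(n,1)=2$ match the formula, and the Pascal-identity bookkeeping closes the induction. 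You correctly flagged the only delicate step (inheritance of general position by the traces) and handled the degenerate $d=2$ case where the traces coincide at the origin.
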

In fact, even sampling all labels of a sample of size $n$, even when
the network consists of a single neuron, in time polynomial in both
$n$ and dimension $d$ of the data points, is non-trivial. The number
of chambers by Theorem 1 is the number of labels on a size-$n$ sample,
which from the above Proposition is roughly $O(n^d)$.  Clearly,
trivial enumeration of labels is out of question. As we will see later
in Section~\ref{s:rcr}, this is not the only difficulty even for a
single neuron.

\begin{proposition}
  Let $Q_1\upto Q_n$ form a centered hyperplane arrangement in $d$
  dimensions. Let $\v_i\in\reals^d$ be any vector normal to the 
  hyperplane $Q_i$. If $\v_1\upto \v_n$ have rank $r$,
  then any chamber in the hyperplane arrangment has at least $r$
  faces.
\end{proposition}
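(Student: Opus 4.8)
The plan is to re-describe the faces of a chamber in terms of the halfspaces that cut out its closure, and then to run a dimension count on the lineality space of that closed cone. Fix a chamber $C$. Because the arrangement is centered we may write $Q_i=\{x\in\reals^d:\v_i^T x=0\}$, and $C$ is an open polyhedral cone $\{x:\epsilon_i\v_i^T x>0,\ i=1,\dots,n\}$ for a suitable sign vector $\epsilon\in\{-1,+1\}^n$. The faces (facets) of $C$ are the $(d-1)$-dimensional sets of the form $Q_i\cap\overline C$; distinct faces lie on distinct hyperplanes of the arrangement, since two distinct centered hyperplanes meet in a subspace of dimension $d-2$, and every face lies on some $Q_i$ because $\partial\overline C\subseteq\bigcup_i Q_i$. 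Hence the number of faces of $C$ equals $|S|$, where $S=\{i:Q_i\text{ carries a facet of }\overline C\}$, and it suffices to prove $|S|\ge r$.

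I would reduce this to a statement about the span of the corresponding normals. Since $|S|\ge\dim\operatorname{span}\{\v_i:i\in S\}$, it is enough to show that $\operatorname{span}\{\v_i:i\in S\}$ has dimension at least $r=\operatorname{rank}\{\v_1,\dots,\v_n\}$; in fact I expect it to equal the full span. The one external ingredient here is the standard fact that a full-dimensional polyhedron is already cut out by its facet-defining halfspaces, which in our situation gives
\[
\overline C=\{x\in\reals^d:\epsilon_i\,\v_i^T x\ge 0,\ i\in S\}.
\]

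The heart of the argument is then a lineality-space comparison. I would pick any $x_0\in C$ and set $U=\operatorname{span}\{\v_i:i\in S\}$. For every $u\in U^{\perp}$ and every $i\in S$ we have $\v_i^T(x_0+u)=\v_i^T x_0$, so $\epsilon_i\v_i^T(x_0+u)=\epsilon_i\v_i^T x_0>0$; by the displayed description of $\overline C$, the whole affine subspace $x_0+U^{\perp}$ lies inside $\overline C$. A closed convex set that contains a translate of a linear subspace must contain that subspace in its lineality (equivalently recession) space, so $U^{\perp}$ is contained in the lineality space of $\overline C$. But for a centered arrangement the lineality space of $\overline C=\{x:\epsilon_i\v_i^T x\ge 0\ \forall i\}$ is exactly $\bigcap_i Q_i=(\operatorname{span}\{\v_1,\dots,\v_n\})^{\perp}$, of dimension $d-r$. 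Comparing dimensions gives $\dim U^{\perp}\le d-r$, i.e.\ $\dim U\ge r$, hence $|S|\ge r$.

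The obstacles are bookkeeping rather than conceptual: one must be careful to read ``face'' as facet and to verify that faces biject with the indices in $S$, and one should cite cleanly the two standard facts invoked — that a full-dimensional polyhedron equals the intersection of its facet-defining halfspaces, and that a closed convex set containing an affine subspace contains it in its recession/lineality space. It is worth noting that the argument needs no general-position hypothesis; it uses only that the arrangement is centered together with the rank of the normal vectors.
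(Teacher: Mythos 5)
Your proof is correct, and it reaches the conclusion by a genuinely different route from the paper. The paper argues by contradiction and explicit perturbation: assuming a chamber has only $b<r$ faces with inward normals $\u_1,\dots,\u_b$, it picks an additional normal $\u_{b+1}$ independent of these, constructs a vector $\textbf{y}$ orthogonal to $\u_1,\dots,\u_b$ but not to $\u_{b+1}$, and translates an interior point along $\textbf{y}$ to land on the hyperplane of $\u_{b+1}$ while keeping the $b$ strict inequalities --- exhibiting an extra face and contradicting $b<r$. You instead run a direct dimension count: the orthogonal complement of the span $U$ of the facet normals translates into $\overline C$, hence sits inside the lineality space $\bigcap_i Q_i=(\operatorname{span}\{\v_1,\dots,\v_n\})^{\perp}$, forcing $\dim U\ge r$. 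The linear-algebra kernel is essentially dual (the paper's $\textbf{y}$ is precisely a witness that $U^{\perp}\not\subseteq(\operatorname{span}\{\v_i\})^{\perp}$ when $|S|<r$), and both arguments lean on the same external fact that the closed chamber is cut out by its facet-defining halfspaces --- the paper uses this implicitly when it infers from the $b$ strict inequalities that $\x'$ lies in (the closure of) the chamber, whereas you state it explicitly. What your version buys is a cleaner, non-contradiction argument with the delicate step isolated as a citable convexity fact; what the paper's buys is a concrete construction of the missing face, which is closer in spirit to how the algorithmic results later use the geometry. No gap in either direction.
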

\begin{proof}
  Let $V=\sets{\v_1\upto \v_n, -\v_1\upto -\v_n}$.  Suppose the
  proposition is false. Then there exists a chamber with exactly $b<r$ faces.
  Without loss of generality, let $\textbf{u}_1\in V,\cdots,\textbf{u}_b\in V$
  be the normal vectors of the $b$ different faces of this chamber respectively,
  such that for any point $\textbf{x}$ within the chamber
  \[
    \u_i^T\x>0.
  \] 
  Since the rank of $\v_1\upto \v_n$ is $r>b$, we can choose a vector
  $\u_{b+1}\in V$ such that $\u_{b+1}$ is linearly independent from
  $\u_1,\cdots,\u_b$. 

  We now show that the hyperplane that determined by $\u_{b+1}$ is
  also a face of the chamber by proving that there is a point
  $\textbf{x}'$ in the chamber satisfying
  \begin{equation}
    \label{eq:const}
    \u_i^T\textbf{x}'>0
    \;\;\;\; 1\le i\le b
    \quad
    \text{ and }
    \quad
    \u_{b+1}^T\textbf{x}'=0. 
  \end{equation}
  Since $\u_{b+1}$ is linearly independent of $\u_1\upto \u_b$, we
  can choose a vector $\textbf{y}$ such that $\textbf{y}^T\u_i=0$ for
  $1\le i\le b$ but $\textbf{y}^T\u_{b+1}\not=0$.
  Now let $\textbf{x}$ be any point in the chamber and set
  $\textbf{x}'=\textbf{x}+t\textbf{y}$ where
   $t=-\u_{b+1}^T\x/\u_{b+1}^T\textbf{y}$.
   It is easy to verify now that $\x'$ satisfies~\eqref{eq:const}.
   This contradicts the assumption that the chamber contained
   $b$ faces, where $b<r$.
\end{proof}
\begin{proposition}
The chamber graph of any hyperplane arrangement $Q_1\upto Q_n$ in $\reals^d$ in general position satisfies (i) the degree of any vertex is at least $d$ and at most $n$, and (ii) any pair of vertices has graph distance at most $n$.
\end{proposition}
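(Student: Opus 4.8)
The plan is to prove the three claims in turn, leaning on Proposition 2 for the degree lower bound and on elementary geometry for the rest.

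\emph{Degree bounds (i).} Fix a chamber $C$ with vertex $c$ in the chamber graph. Since the arrangement is in general position in $\reals^d$, the normals $\v_1\upto \v_n$ span $\reals^d$, so they have rank $d$; Proposition 2 then immediately gives that $C$ has at least $d$ faces. Each face of $C$ lies on a distinct hyperplane $Q_i$ and is shared with exactly one neighbouring chamber (the reflection of $C$ across that facet hyperplane), so $\deg(c)\ge d$. For the upper bound: a facet of $C$ is contained in some $Q_i$, and for each $i$ the chamber $C$ can have at most one facet on $Q_i$ (the face $C\cap \overline{Q_i}$, when it is $(d-1)$-dimensional, is connected since $C$ is convex). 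Hence $C$ has at most $n$ facets and $\deg(c)\le n$. I would phrase ``face'' here as ``facet'' (a $(d-1)$-dimensional face) to match the chamber-graph adjacency; a brief remark reconciling this with Proposition 2's usage may be warranted.

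\emph{Diameter bound (ii).} Take two chambers $C, C'$ with interior points $\x,\x'$ respectively. Choose $\x,\x'$ generically on the segment's endpoints so that the open segment $\x\x'$ avoids $\bigcap$ of any two hyperplanes (possible since each $Q_i\cap Q_j$ has dimension $\le d-2$ and there are finitely many such intersections, and similarly avoid the origin). Walk along the straight segment from $\x$ to $\x'$: it crosses the hyperplanes $Q_1\upto Q_n$ one at a time, each crossing moving from one chamber to an adjacent one. The key point is that a line in general position crosses each hyperplane $Q_i$ at most once — a line meets a hyperplane in at most one point unless it lies inside it, which a generic segment does not. Therefore the segment passes through a sequence of at most $n+1$ chambers, giving a path of length at most $n$ in the chamber graph from $C$ to $C'$.

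\emph{Main obstacle.} The routine parts are the counting; the one genuinely delicate point is the transversality/general-position bookkeeping in (ii): I must ensure the chosen segment crosses the hyperplanes \emph{transversally and one at a time}, i.e. it never passes through an intersection $Q_i\cap Q_j$ (where two chambers could change simultaneously and the ``adjacent chamber'' step would fail) and never lies within any $Q_i$. This is where I would spend the care — arguing that the bad set (segments hitting some codimension-$2$ stratum) is a finite union of lower-dimensional conditions on the endpoints, so a generic perturbation of $\x,\x'$ inside their respective open chambers avoids it, without changing which chambers $C,C'$ we are connecting. A subtlety also arises if the arrangement is not \emph{essential} (the $Q_i$ share a common line, as happens in the centered case when $d$ here is really $d+1$ from a single neuron): then chambers are unbounded and one should take the segment to avoid the common intersection subspace as well, which the same genericity argument handles.
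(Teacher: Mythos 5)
Your proposal is correct and matches the paper's (very terse) proof: part (i) is exactly the appeal to Proposition 2 plus the one-facet-per-hyperplane upper bound, and your generic-segment argument for (ii) is precisely the standard proof of the lemma the paper simply cites from Ovchinnikov (Lemma 7.15). The only implicit hypothesis worth making explicit is $n\ge d$, which general position needs in order for the normals to have rank $d$ so that Proposition 2 yields the degree lower bound.
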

\begin{proof}
(i) from Proposition $2$, (ii) from \cite[Lemma 7.15]{ovchinnikov2011graphs}.
\end{proof}

\section{Sampling labelings }
For the sample $X=\sets{\x_1\upto \x_n}$, where $\x_i\in\reals^d$, 
$S_X$ is the set of all possible labels generated on $X$
by the network $N$. We would like to sample from $S_X$ uniformly.

In Section~\ref{s:rcr}, we let $N$ be a single neuron and even this
turns out to be non-trivial. Inspired by the inductive approach for
computing hyperplane partition number \cite[Chapter
2]{stanley2004introduction}, we derive Algorithm RS (for Recursive
Sampling) in Section~\ref{s:rcr} that generates a label from $S_X$
almost uniformly.

In Sections~\ref{s:rw} and~\ref{s:an} we expand in two directions. In
Section~\ref{s:rw}, we provide means to perfectly sample from all
labelings of a single neuron using a random walk on $S_X$ with a perfectly
uniform stationary distribution. This allows us to sample from $S_X$
perfectly uniformly.  The mixing time of this random walk is as yet
unproven, but we provide partial evidence (empirical as well as proofs
for small dimensions) that this random walk is fast mixing, with
mixing time at most linear in the number of dimensions and at most
quadratic in the number of samples.

In Section~\ref{s:an}, we build on our RS approach to sample from
arbitrary feedforward networks in time (true) polynomial in the sample
size $n$, network size $W$ and input dimension ($d$), showing that
even for arbitrary networks, we get near-uniform sampling of the
possible labels that could be produced by the network.

\subsection{The recursive approach}
\label{s:rcr}
From Theorem 1, to sample uniformly from $S_X$ we only need to sample the
weights uniformly from the connected components of $\mathbb{R}^{d+1}\backslash \bigcup_iP_i$.

However, even for a single neuron, this is not trivial. As already
noted from Proposition 2, the basic combinatorial difficulty comes
from the fact that there are roughly $n^d$ labelings for almost all
samples $X$---therefore the number of chambers is exponential in the
dimension $d$. Clearly one can not simply enumerate all the possible
components.

But a bigger difficulty comes from the fact that the arrangement of
the hyperplanes can be very heterogeneous. The volume of some of the
chambers can be arbitrarily small and therefore such chambers may be
difficult to find. We settle this problem by using a recursive
sampling approach that is inspired by the inductive approach for
computing hyperplane partition number.

Our recursive algorithm RS($\v_1\upto \v_k$) (see Algorithm $1$) takes
as its inputs $k$ unit vectors $\v_1\upto \v_k$, all from, say
$\reals^m$. The vectors $\v_1\upto \v_k$ are interpreted as normal
vectors of $k$ distinct centered hyperplanes in $\reals^m$. For
simplicity, the reader can assume that these hyperplanes are in the
general position, but they do not have to be. To sample from $S_X$, we
would therefore simply call RS($\tilde{\x}_1\upto \tilde{\x}_n$),
where $\tilde{\x}_i= \frac{(\x_i,-1)}{||(\x_i,-1)||_2}\in\reals^{d+1}$
and $\x_i\in X$.

The call RS($\v_1\upto \v_k$) works recursively on the
dimension $m$ of the vectors $\v_i$ and $k$, by calling RS with a new set of
vectors $\u_1\upto \u_{k'}$ in $\reals^{m-1}$ with $k'\le k-1$.
The base case is when RS is called with vectors in 1 dimension or when
$k=1$. When RS is called with vectors in 1 dimension, the problem is
trivial since there is only one centered hyperplane arrangement in
$1-$dimension, the origin. When RS is called with $k=1$ (no matter the
dimension of the single input vector), the problem is also trivial
since there are only two chambers for one hyperplane.

To generate the vectors $\u_i$ in $\reals^{m-1}$, we choose a
hyperplane at random from $\v_1\upto \v_k$, say $\v_i$, and compute
the intersection of $\v_i$ with all the remaining hyperplanes. These
intersections are at most $k-1$  hyperplanes in $\mathbb{R}^{m-1}$ 
and let $\u_1\upto \u_{k'}$ be the unit normal vectors of
these hyperplanes (written in the specific orthonormal basis indicated).

\begin{algorithm}[t]
\caption{RS($\v_1\upto \v_k$)}\label{alg:1}
\textbf{Input:} $\textbf{v}_1,\cdots,\textbf{v}_k\in \mathbb{R}^m$, interpreted
as unit normal vectors of $k$ (distinct) centered hyperplanes in an $m-$dimensional space. \\
\textbf{Output:} point $\textbf{y}\in \mathbb{R}^m$ representing a chamber
in the hyperplane arrangement formed by $\v_1\upto \v_k$, \\

Let $P_i$ be the hyperplane in $\reals^m$ orthogonal to $\v_i$.
\begin{itemize}
\item[1.]If $m=1$ output -1 or 1 with equal probability. If $m>1$ but $k=1$, output $\v_1$ or $-\v_1$ with equal probability.
\item[2.] Uniformly choose an index $I$ from $\sets{1\upto k}$. 
\item[3.] For hyperplane $P_I$, choose an arbitrary orthonormal basis
  $B\in\reals^{(m-1)\times m}$. Note that $P_I$ is a $(m-1)$-dimensional
  linear space in $\reals^m$, and the $m-1$ rows of $B$ contain the 
  orthonormal basis vectors, each being a vector in $\reals^m$.
\item[4.] Compute the intersection of $P_I$ with $P_j$,
  $j\in \sets{1\upto k}\backslash \sets{I}$. 
\item[5.] Set $\v_j'$ to be the unit vector in $P_I$ normal to $P_I\cap P_j$
  (written using the basis $B$),
  $j\in \sets{1\upto k}\backslash \sets{I}$. Note $\v_j'\in\reals^{m-1}$.
\item[6.]$\textbf{x}=RS(\u_1,\cdots,\u_{k'})$, where $\u_1,\cdots,\u_{k'}$ are the distinct vectors among $\{\v_j'\mid j\not=I\}$. Note $k'\le k-1$.
\item[7.] Compute the smallest distance $\delta$ of $\textbf{x}^TB$ to
  the planes $P_j$ with $j\not=I$.
\item[8.] Let $t$ be -1 or 1 with equal probability, output
  $\textbf{y}=\textbf{x}^TB+t\delta\textbf{v}_I$.
\end{itemize}
\end{algorithm}

\begin{theorem}
  Let $V=\sets{\v_1\upto \v_k}$ where $\v_i\in\reals^m$ and rank of
  $V$ is $m$. Let $C_V$ be the set of non-empty chambers induced by the $k$
  centered hyperplanes orthogonal to the vectors in $V$. Algorithm
  RS($\v_1\upto \v_k$) runs in $O(km^3)$ time and any chamber in the
  hyperplane arrangment induced by $V$ is sampled with probability at
  least
  $$
  \frac{1}{2^m\binom{k}{m}}\ge \left(\frac{m}{2ek}\right)^m,\text{ where }e\text{ is the base of nature logorithm}.
  $$
\end{theorem}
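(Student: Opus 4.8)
The plan is to prove the claim by induction on the recursion, tracking simultaneously (a) the running time, (b) the fact that the output point always lies in a genuine chamber, and (c) the lower bound on the probability of hitting any fixed chamber. The running-time bound is the easy part: each call does a constant number of linear-algebra operations (building an orthonormal basis for $P_I$, intersecting $k-1$ hyperplanes with $P_I$, computing $k-1$ distances) each costing $O(m^3)$ or less, the recursion depth is at most $m$, and the number of vectors only shrinks, so the total is $O(km^3)$. So the substance is the probability bound.

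The key structural observation, which I would prove first as a lemma, is the recursive relationship between the chambers of the arrangement $\mathcal{A}$ induced by $V$ in $\reals^m$ and the chambers of the lower-dimensional arrangement $\mathcal{A}_I$ obtained by intersecting all $P_j$ ($j\neq I$) with $P_I$. Concretely: fix a target chamber $C$. Among the $k$ hyperplanes, some subset actually contributes a facet to $C$ (by Proposition~2 there are at least $m$ of them, since $\mathrm{rank}(V)=m$). If in step~2 we pick an index $I$ with $P_I$ a bounding hyperplane of $C$, then $C$ has a facet lying on $P_I$; that facet is itself a chamber $C'$ of the arrangement $\mathcal{A}_I$ inside $P_I$, and $C$ is exactly one of the (at most two) chambers of $\mathcal{A}$ obtained by taking $C'$ and pushing off $P_I$ in the $\pm\v_I$ direction. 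Steps~6--8 implement precisely this: the recursive call returns some chamber of $\mathcal{A}_I$, and then step~8 picks one of the two sides with probability $1/2$. So conditioned on $I$ being a facet-hyperplane of $C$ \emph{and} on the recursive call returning the corresponding facet chamber $C'$, step~8 outputs $C$ with probability exactly $1/2$. I would also need to check that $\delta>0$ and that $\x^TB + t\delta\v_I$ indeed lands strictly inside a chamber adjacent to $C'$ — this is where one uses that $\x$ is strictly interior to $C'$ and $\delta$ is the distance to the nearest other plane, so no plane $P_j$ is crossed.

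With the lemma in hand, I would set up the induction. Let $p(\mathcal{A}, C)$ denote the probability RS outputs $C$. Picking a facet-hyperplane $P_I$ of $C$ happens with probability $\ge m/k$ (at least $m$ good choices out of $k$, using Proposition~2). Conditioned on such an $I$, the recursion runs on $\u_1\upto\u_{k'}$ with $k'\le k-1$ in dimension $m-1$; here one must check the rank of the new vector set is $m-1$ (the normals of the facets of $C'$ within $P_I$ span $P_I$, since $C'$ is a chamber of a centered arrangement whose normals, being intersections of $m$-spanning planes with $P_I$, still span the $(m-1)$-space — this deserves a short argument). By induction the recursive call returns the right facet chamber $C'$ with probability $\ge \frac{1}{2^{m-1}\binom{k-1}{m-1}}$, and then step~8 multiplies by $1/2$. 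Combining,
\[
p(\mathcal{A},C)\;\ge\;\frac{m}{k}\cdot\frac{1}{2^{m-1}\binom{k-1}{m-1}}\cdot\frac12
\;=\;\frac{m}{k}\cdot\frac{1}{2^m\binom{k-1}{m-1}}
\;=\;\frac{1}{2^m\binom{k}{m}},
\]
using the identity $\frac{m}{k}\binom{k}{m}=\binom{k-1}{m-1}$. The base cases $m=1$ and $k=1$ give probability exactly $1/2 = 1/(2^1\binom{1}{1})$ and $1/2 \ge 1/(2^m\binom{1}{1})$ respectively, matching. Finally the clean bound $\frac{1}{2^m\binom{k}{m}}\ge\left(\frac{m}{2ek}\right)^m$ follows from $\binom{k}{m}\le (ek/m)^m$.

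The main obstacle I anticipate is the bookkeeping in the lemma — precisely matching facets of $C$ in $\reals^m$ with chambers of the induced arrangement $\mathcal{A}_I$ in $P_I$, especially when the original planes are \emph{not} in general position (so several $P_j$ may intersect $P_I$ in the same $(m-2)$-flat, which is why step~6 dedupes to $k'\le k-1$ distinct vectors). One must be careful that dedup does not change the chamber structure inside $P_I$, and that "facet of $C$ on $P_I$" is genuinely a single chamber of $\mathcal{A}_I$ rather than a union — this is true because $C$ is convex and $P_I$ is a supporting hyperplane of it. A secondary subtlety is confirming the rank hypothesis propagates to the recursive call so that the inductive hypothesis applies; if the rank were to drop, Proposition~2 would only guarantee $k'$-many (not $(m-1)$-many) facets and the counting would fail, so I would argue the rank is preserved by the specific geometry of facet normals of a full-dimensional chamber.
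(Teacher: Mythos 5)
Your proposal is correct and follows essentially the same route as the paper's (outline) proof: both reduce to the recursion $p(m,k)\ge \frac{m}{2k}\,p(m-1,k-1)$ by using Proposition~2 to guarantee at least $m$ facet hyperplanes of the target chamber, identifying the facet with a chamber of the induced arrangement in $P_I$, and closing the induction with $\frac{m}{k}\binom{k}{m}=\binom{k-1}{m-1}$ and the base case $p(1,\cdot)=1/2$. The details you flag (that $\delta>0$ keeps the output on the correct side of every $P_j$, and that the rank of the projected normals is exactly $m-1$ so the inductive hypothesis applies even when $k'<k-1$) are exactly the points the paper asserts without elaboration, so your write-up is a faithful, slightly more complete version of the same argument.
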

\begin{proof} (Outline only)
  The algorithm will run at most $m$ recursive iterations. For each
  iteration, we need $O(m^2)$ to compute the base of the null space
  (Step 3) and $O(km^2)$ time in Step 7 to compute the projection of
  each input vectors to the plane chosen in Step 2. This yields the
  total complexity to be $O(km^3)$.

  To see the probability lower bound, define
  $$p(m, k)=\min_{V,c\in C_V}\mathrm{Pr}[RS(V)=c],\text{ with }\text{rank}(V)=m\text{ and }|V|\le k.$$
  We now claim that
  $$p(m, k)\ge \frac{m}{2k}p(m-1, k-1).$$
  This is because any chamber $c\in C_V$ has at least $m$ faces by
  Proposition 2. For any chamber $c\in C_V$, we therefore have
  probability at least $\frac{m}{2k}$ of choosing both a hyperplane
  that forms the face of $c$ and the direction of the hyperplane that
  faces the chamber $c$. Conditioned on this choice of hyperplane and
  direction, we need to obtain the probability that the recursive call
  in step in Step 6 returns a point in the face of $c$.

  Observe that the face of $c$ is a $m-1$-dimensional linear space. In
  Step 6, note that the rank of $\sets{ \u_1\upto \u_{k'}}$ is exactly
  $m-1$, but $k'$ can be less than $k-1$. The theorem follows by solving the recursive
  inequality, standard approximations on binomial coefficient and by
  noting that when $m=1$, there are two chambers, thus yielding
  $p(1,k)=1/2$ for all $k$.
\end{proof}
Note that when $\text{rank}(X)=d$ the above probability is
${\cal O}\Paren{\frac{d!}{2^dn^d}}$, a factor $\frac{1}{2^d}$ off the
hyperplane slicing bound $2\sum_{i=0}^{d-1}\binom{n-1}{i}$ in
Proposition 1. Note also that if the input vectors in $\reals^d$ have
rank $m<d$, the above approach still works. We can effectively project
down the inputs into $\reals^m$ by choosing a basis for $\reals^d$
that contains $d-m$ vectors that are orthogonal to the span of the
input vectors.

\subsection{A random walk approach}
\label{s:rw}
To mitigate the fact that the recursive approach above only yields
approximately uniform sampling, We introduce a random walk based
algorithm that samples arbitrarily close to uniform. Specifically, we
run Algorithm NRW on a \emph{lazy chamber graph}, both outlined
below. One component of Algorithm NRW is Algorithm Chamber, that
determines which chamber an input point belongs to.
\begin{algorithm}[t]
\caption{NRW}\label{alg:2}
\textbf{Input:} walk length $T$ and hyperplanes $P_1,\cdots,P_n$ in $\mathbb{R}^m$\\
\textbf{Output:} point $\textbf{w}\in \mathbb{R}^m$ and chamber $c$
\begin{itemize}
\item[1.] Initialize $\textbf{w}_0=RS(\textbf{v}_1,\cdots,\textbf{v}_n)$, where $\textbf{v}_i$ is a normal vector of $P_i$
\item[2.] Set $c_0= $ Chamber$(\w_0)$. $c_0$ will be the chamber
  in the arrangement $\sets{P_i}$ that contains
  $\textbf{w}_0$.
\item[3.] For $t=1$ through $T$, do
  \begin{itemize}
  \item[a.] Uniformly choose a face of $c_{t-1}$
  \item[b.] Set $c_t$ to be the chamber adjacent to $c_{t-1}$ and across the
    face chosen in step (a.)
  \item[c.] Set $\textbf{w}_t$ to any point in the chamber $c_t$
    \end{itemize}
 \item[4.] Output $\w_T$ and $c_T$
 \end{itemize}
\end{algorithm} 

\begin{algorithm}[b]
\caption{Chamber}\label{alg:3}
\textbf{Input:} point $\textbf{w}\in \mathbb{R}^m$ and hyperplanes $P_1,\cdots, P_n$\\
\textbf{Output:} The faces $P_{i_1},\cdots,P_{i_k}$ 
of the chamber containing $\textbf{w}$.
\begin{itemize}
\item[1.]Compute $\sigma_i=\text{sign}(\textbf{w}^T\textbf{v}_i)$.
\item[2.]For $1\le i\le n$ do:
\begin{itemize}
\item[3.]Define a linear program with $\textbf{w}^T\textbf{v}_i=0$ and $\sigma_j(\textbf{w}^T\textbf{v}_j)>0$ for $j\not =i$.
\item[4.]If the linear programming in step $3$ has a solution, add $P_i$ to the collection.
\end{itemize}
\end{itemize}
\end{algorithm} 

\begin{theorem}
Algorithm Chamber runs in polynomial time both on $d$ and $n$.
\end{theorem}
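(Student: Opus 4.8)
The plan is to split Algorithm~Chamber into its two ingredients---the sign computation in Step~1 and the $n$ linear feasibility queries run in Steps~2--4---and to bound each. First I would observe that Step~1 just forms the $n$ inner products $\textbf{w}^T\textbf{v}_i$ in $\reals^m$ and records their signs, at a cost of $O(nm)$ arithmetic operations; in the single-neuron setting $m=d+1$, so this is already polynomial in $d$ and $n$. I would also note, to see that the algorithm does the right thing, that when $\textbf{w}$ is an interior point of a chamber (which it is whenever it is produced by Algorithm~RS or picked as a chamber representative inside NRW) every $\sigma_i$ is nonzero, so the chamber containing $\textbf{w}$ is exactly the open cone $\{\,\textbf{x}:\sigma_j(\textbf{x}^T\textbf{v}_j)>0\ \text{for all }j\,\}$; a hyperplane $P_i$ supports a face of this chamber precisely when the system written in Step~3 is feasible, so the loop returns exactly the faces, and only the per-query cost remains to be bounded.

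Next I would argue that each feasibility query is an ordinary linear program of size polynomial in $n$ and $m$. The system of Step~3---one equation $\textbf{x}^T\textbf{v}_i=0$ and $n-1$ strict inequalities $\sigma_j(\textbf{x}^T\textbf{v}_j)>0$ in the $m$ unknowns---is homogeneous in $\textbf{x}$, so it is feasible if and only if the relaxed system obtained by replacing every ``$>0$'' by ``$\ge 1$'' is feasible (rescale a strict solution until the inequalities reach $1$; conversely any relaxed solution is already strictly feasible and still solves the equation). The relaxed system is a genuine linear program with $m$ variables, at most $n$ constraints, and coefficients taken directly from the input data, hence of bit-length polynomial in the input; its feasibility can be decided in $\mathrm{poly}(n,m)$ time by any polynomial-time linear programming algorithm (ellipsoid or interior point). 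One such test is run for each $i\in\{1,\dots,n\}$.

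Adding the pieces gives the theorem: $O(nm)$ for Step~1 plus $n$ feasibility tests at $\mathrm{poly}(n,m)$ each, i.e.\ $\mathrm{poly}(n,m)=\mathrm{poly}(n,d)$ overall. The one point that needs care, and which I expect to be the only real obstacle, is the passage from the strict inequalities of Step~3 to an ordinary linear program---a linear program cannot carry open-halfspace constraints directly---but the homogeneity/rescaling remark above handles it, after which the rest is routine bookkeeping of the linear-algebra and LP costs.
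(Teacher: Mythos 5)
Your proposal is correct and takes essentially the same route as the paper, whose entire proof is a one-line appeal to the polynomial-time solvability of linear programming. The extra care you take in converting the strict homogeneous inequalities of Step~3 into an ordinary LP (replacing ``$>0$'' by ``$\ge 1$'' and rescaling) fills in a detail the paper leaves implicit, but it does not change the underlying argument.
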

\begin{proof}
The theorem follows since linear programming can be solved in polynomial time \cite{megiddo1986complexity}.
\end{proof}

\paragraph{Analysis} We first analyze random walk defined by
Algorithm NRW over the simple chamber graph, assuming the hyperplanes are in general position.
With this assumption any vertex in the \emph{chamber graph} has degree
at least $d$ and at most $n$ from Proposition 3. Furthermore, from
Proposition 3 the graph is connected and the distance between any two
vertexes is at most $n$.

Since the random walk is a reversible Markov chain, the stationary
distribution $\pi$ of the random walk will be proportional to the
degree of the vertices~\cite[Chapter 1.6]{levin2017markov}. From our
observation on the bounds of degrees in Proposition 3,
we will therefore have for any two vertices $u$ and $v$
\[
  \frac{d}{n}\le\frac{\pi(u)}{\pi(v)}\le \frac{n}{d}.
\]
The more fundamental question is the mixing time of the
random walk, or how quickly the walk generates stationary
samples. While there are several approaches to analyze the mixing time, we
focus on Cheeger's inequality \cite[Theorem 13.14]{levin2017markov}
that bounds the spectral gap of the random walk's transition matrix 
using the \emph{conductance} of the graph. Recall that the
conductance of a graph is
$$\min_{A\subset V, \text{vol}|A|\le \frac{1}{2}\text{vol}|V|}\frac{|\partial A|}{\text{vol}|A|},$$
where $V$ is the vertex set, $\partial A$ is size of the cut between
$A$ and $V\backslash A$, $\text{vol}|A|$ is the sum of degrees of
vertexes in $A$. The following theorem gives a lower bound on the
conductance of chamber graph when dimension $d=2$.
\begin{theorem}
The chamber graph of $2$-dimensional hyperplane arrangement with size $n$ that is in the general position has conductance lower bounded by $\frac{1}{2n}$.
\end{theorem}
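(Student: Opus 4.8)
The plan is to pin down the chamber graph of a $2$-dimensional centered arrangement explicitly and then read off its conductance. When $d=2$ the hyperplanes $P_1\upto P_n$ are lines through the origin of $\reals^2$, and general position just means they are pairwise distinct. Sorting these $n$ lines by angle produces $2n$ distinct rays out of the origin (the two antipodal half-lines of each $P_i$), and these rays cut $\reals^2$ into $2n$ open angular sectors; by Proposition 1 with $d=2$ the chamber count is $2\sum_{i=0}^{1}\binom{n-1}{i}=2n$, so these sectors are exactly the chambers.

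First I would argue that the chamber graph is the cycle $C_{2n}$. Each chamber is an angular sector whose closure is bounded by exactly two consecutive rays, hence has exactly two facets; the chamber sharing a given one of these facets is the adjacent angular sector on the other side of that ray. So every vertex of the chamber graph has exactly two neighbours, namely its two angular neighbours among the $2n$ sectors listed in cyclic order, and the chamber graph is a single $2n$-cycle. (Consistency check: Proposition 3 forces the degree to lie between $d=2$ and $n$, and here it is exactly $2$; its diameter is $n$, matching the distance bound.)

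Next I would compute the conductance of $C_{2n}$ from the definition. Every vertex has degree $2$, so $\text{vol}(V)=4n$, and any admissible $A$ (with $\text{vol}(A)\le 2n$) has $|A|\le n$, hence is a proper nonempty subset of the cycle. Decomposing $A$ into its $\ell\ge 1$ maximal arcs, the cut has size $|\partial A|=2\ell\ge 2$ while $\text{vol}(A)=2|A|\le 2n$, so
\[
\frac{|\partial A|}{\text{vol}(A)}\;\ge\;\frac{2}{2n}\;=\;\frac{1}{n}\;\ge\;\frac{1}{2n},
\]
and minimizing over $A$ gives the stated bound; in fact the conductance equals $\frac{1}{n}$, attained when $A$ is a half-cycle of $n$ vertices. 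The same arc-decomposition argument applied to the lazy chamber graph, whose self-loops double every degree but change no cut, yields conductance exactly $\frac{1}{2n}$, matching the constant in the statement.

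I do not expect a genuine obstacle here; the points needing care are (a) verifying that adjacency in the chamber graph is exactly the ``angularly consecutive'' relation and nothing more --- in particular that the origin, a $0$-dimensional face in the closure of every chamber, creates no extra edges, which follows from the facet-sharing definition together with Proposition 3 --- and (b) bounding $|\partial A|$ for an arbitrary, possibly disconnected, $A$, which the maximal-arc decomposition handles. This $d=2$ case is really a sanity check for the conductance approach before attacking general $d$.
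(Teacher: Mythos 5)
Your argument is correct for the arrangement that the paper's formal definitions actually describe --- $n$ distinct lines through the origin of $\mathbb{R}^2$, whose chamber count $2\sum_{i=0}^{1}\binom{n-1}{i}=2n$ you correctly match to the $2n$ angular sectors. For that object the chamber graph is indeed the cycle $C_{2n}$, your maximal-arc decomposition gives $|\partial A|\ge 2$ for any nonempty proper $A$, and the conductance is exactly $1/n\ge 1/(2n)$. This is, however, a genuinely different route from the paper's, and the difference is not cosmetic. The paper never identifies the chamber graph globally; it runs an extremal argument: take a volume-minimal minimizer $X$ of the conductance ratio, show that both $X$ and $V\setminus X$ are connected (so the boundary $S$ is a single separating polyline), and then lower-bound the cut by noting that some single line contributes at least $\mathrm{vol}(X)/n$ of the faces counted by $\mathrm{vol}(X)$, each of which forces a crossing of $S$. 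That machinery is pointless for a $2n$-cycle; it is written for a setting in which ``at most two lines can intersect at the same point on $S$,'' a condition that fails at the origin of a centered arrangement (where all $n$ lines concur) and only makes sense for an affine arrangement of $n$ lines with no three concurrent, whose chamber graph has $1+n+\binom{n}{2}$ vertices and chambers with as many as $n$ facets. Under that reading your cycle identification, and hence your entire computation, does not apply, and an argument of the paper's connectivity-plus-face-counting type becomes necessary; so while your proof is correct and much cleaner for the centered case, it does not subsume the general case that the paper's own proof and its higher-dimensional Conjecture 1 are aimed at. A minor point: the paper's lazy chamber graph pads each chamber vertex with dummy pendant vertices to force degree $4n$ (treated separately in Lemma 1), not with self-loops, so your closing parenthetical about it should be dropped.
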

\newcommand{\vol}{\text{vol}}
\begin{proof}
  For any set $A$ of vertices in the chamber graph with size no
  greater than $\frac{1}{2}|V|$, we will show that the conductance of $A$, 
  $\frac{|\partial A|}{\text{vol}|A|}$, is lower bounded as follows
  \[
    \frac{|\partial A|}{\text{vol}|A|}\ge \frac{1}{2n}.
  \] 
  Let $X$ be the set with smallest volume satisfying
  \[ 
    X = \arg \min_{\substack{A\\ \vol(A)\le \half \vol(V)}} \frac{|\partial A|}{\text{vol}|A|}.
  \]
  We first claim that $X$ must be connected. If not, we can write $X$
  as the union of (maximally) connected components, \ie
  $X=\bigcup_{i=1}^r X_i$, where $X_i$ are the maximally connected
  components within $X$ (in particular, note that there are no edges
  between distinct $X_i$). 
  Then, if $a_i=\partial X_i$ and $b_i=\vol(X_i)$, then
  \[
    \frac{|\partial X|}{\text{vol}|X|} = 
    \frac{a_1+a_2+\cdots+a_r}{b_1+b_2+\cdots+b_r}\ge
    \min_{i\in[r]}\frac{a_i}{b_i},
  \]
  implying that $X_i$ has lower conductance than $X$ and is smaller in
  size than $X$, a contradiction.

  Let $S$ be the boundary surface of the chambers corresponding to
  vertexes in $X$. Since $X$ is connected, we must have $S$ to be
  piece-wise line segments. 

  We now claim that $S$ will partition the chamber graph into two
  connected components. Since $X$ is connected, we just have to show
  that $V\backslash X$ is also connected.

  Suppose not, and let $V\backslash X= \bigcup_{i=1}^{m} Y_i$, where
  $Y_i$ are maximally connected, and $V\backslash X$ is the union of
  $m$ different connected components.
  Let $c_i=\partial Y_i$ and $d_i=\vol(Y_i)$. Then we have
  \[
    \sum_{i=1}^m c_i=|\partial X|,
  \]
  and since $\vol(V\backslash X)= \vol(V)-\vol(X)$ and
  $\vol(X)\le \half \vol(V)$, we have
  \[
    \sum_{i=1}^m d_i\ge \half \text{vol}(V)\ge  \text{vol}(X).
  \]
  Therefore, there must be some component $i$ such that
  \[
    \frac{c_i}{d_i}\le \frac{\sum c_i}{\sum d_i} \le \frac{|\partial
      X|}{\text{vol}|X|}.
  \] 
  If $Y_i$ satisfies $\vol(Y_i)\le \frac{1}{2}\text{vol}|V|$, 
  then again we have a contradiction because of the following. If
  $c_i/d_i <  \frac{|\partial X|}{\text{vol}|X|}$, we are done. 
  If   $c_i/d_i = \frac{|\partial X|}{\text{vol}|X|}$, it means
  that every component in $V\backslash X$ has conductance
  $\frac{|\partial X|}{\text{vol}|X|}$. But if there are more than
  two components in $V\backslash X$, then $X$ has a larger 
  cut $\partial X$ than each of the components, and therefore
  must have a larger volume as well, contradicting the assumption
  on $X$.

  If $\vol(Y_i)\ge \frac{1}{2}\text{vol}|V|$, then consider the 
  set $Z= V \backslash Y_i$. Note that
  \[
    Z = X \bigcup \bigl( \cup_{j\ne i} Y_j \bigr).
  \]
  Now $|\partial Z|=|\partial Y_i| \le |\partial X|$. This follows
  since there is no boundary between $Y_i$ and any of the other $Y_j$,
  and the only boundary $Y_i$ has is with $X$. Furthermore,
  $\vol(Z) > \vol(X)$, implying that $Z$ has lower conductance 
  than $X$, again a contradiction.

  Now, we know that the boundary $S$ between the chambers in $X$ and
  the rest of the hyperplane arrangement is exactly a piece-wise line
  segment that separates $\mathbb{R}^2$ into two connected
  components. There are only 3 possibilities, as shown in Figure 1. We
  now observe $\text{vol}|X|$ is exactly the sum of the
  $1$-dimensional faces in the arrangement that intersect with
  $X$. Since there are at most $n$ lines in the arrangement, there
  exist a line $P$ that intersect with $X$ (or $V\backslash X$) by at
  least $\frac{\text{vol}|X|}{n}$ many faces, see figure 1.  The
  number of faces in $S$ is no less than the number of faces in $P$,
  because any line that intersects with $P$ in $X$ must also intersect
  with $S$, and at most two lines can intersect at the same point on
  $S$ by our general position assumption. The theorem now follows.
\begin{figure}
  \centering \vspace{-.5in}
\includegraphics[width=0.6\textwidth]{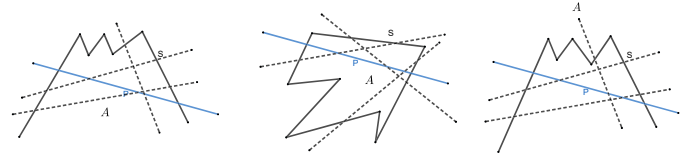}
\vspace{-1.5in}
\caption{Possibility of piece-wise linear partition}
\label{fig:slowmixing}
\end{figure}\end{proof}

For the general dimension case, we have the following conjecture.
See Appendix for justification and partial proofs.
\begin{conj}
  The conductance of any $d$-dimensional general position hyperplane
  arrangement of size $n$ is lower bounded by
$\frac{1}{\text{poly}(n,d)}$.
\end{conj}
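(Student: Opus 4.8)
\emph{Proposed approach.} The plan is to prove the conjecture by induction on the dimension $d$, using the $d=2$ bound of Theorem 5 (or the trivial $d=1$ case) as the base and reducing dimension $d$ to dimension $d-1$ by restricting attention to one of the $n$ hyperplanes. Fix a set $X$ of chambers attaining the minimum conductance, with $\vol(X)\le\frac12\vol(V)$, and write $I(X)$ for the number of $(d-1)$-faces of the arrangement shared by two chambers of $X$, so that $\vol(X)=2I(X)+|\partial X|$. If $|\partial X|\ge \vol(X)/2$ the conductance is already at least $1/2$, so we may assume $I(X)\ge \vol(X)/4$. Exactly as in the proof of Theorem 5 — and those arguments never use $d=2$ — one first shows $X$ is connected (a disconnected minimizer would have a component of no larger conductance and strictly smaller volume) and then that $V\setminus X$ is connected (if it split as $Y_1\upto Y_m$, a $Y_i$ of volume $\le\frac12\vol(V)$ contradicts minimality since $\partial Y_i\subseteq\partial X$, while a $Y_i$ of volume $>\frac12\vol(V)$ makes $Z=V\setminus Y_i\supseteq X$ a strictly better minimizer). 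Hence the geometric boundary $S=\bigcup\{\,(d-1)\text{-faces between }X\text{ and }V\setminus X\,\}$ is a connected polyhedral $(d-1)$-dimensional hypersurface separating $\reals^d$ into the regions occupied by $X$ and by $V\setminus X$.

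Next I would carry out the dimension reduction. By pigeonhole some hyperplane $P_i$ contains at least an $1/n$ fraction of the internal $(d-1)$-faces of $X$, i.e. at least $I(X)/n$ of them. Restricting the arrangement to $P_i$ gives a general-position arrangement $\mathcal A_i$ of at most $n-1$ hyperplanes in $P_i\cong\reals^{d-1}$ whose chambers are precisely the $(d-1)$-faces of the original arrangement lying on $P_i$. The internal $(d-1)$-faces of $X$ on $P_i$ form a set $X_i$ of chambers of $\mathcal A_i$ with $|X_i|\ge I(X)/n$, and since every chamber of $\mathcal A_i$ has at least $d-1$ faces, $\vol_{\mathcal A_i}(X_i)\ge (d-1)|X_i|\ge (d-1)I(X)/n$. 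Applying the inductive hypothesis to $\mathcal A_i$ (to whichever of $X_i$ and $\overline{X_i}$ has the smaller volume — the case where $X_i$ fills essentially all of $\mathcal A_i$ may need a separate argument, or a different choice of $P_i$) lower bounds $|\partial_{\mathcal A_i}X_i|$ by $\mathrm{cond}_{d-1}(n)\cdot\Omega(d\, I(X)/n)$.

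The last step lifts this $(d-2)$-dimensional boundary back into $\partial X$. Let $g$ be a $(d-2)$-face of $\mathcal A_i$ on the boundary of $X_i$; then $g$ lies on $P_i\cap P_j$ for a unique $j\neq i$, and near a generic point of $g$ the hyperplanes $P_i,P_j$ carve out four chambers of the original arrangement. Because $g$ separates, inside $P_i$, a face in $X_i$ from a face not in $X_i$, a short case check on those four chambers produces a $(d-1)$-face on $P_j$, adjacent to $g$, that is shared by a chamber of $X$ and a chamber of $V\setminus X$ — that is, an element of $\partial X$ whose intersection with $P_i$ is exactly $g$. Distinct $g$ therefore give distinct elements of $\partial X$, so $|\partial X|\ge|\partial_{\mathcal A_i}X_i|$, and combining with the previous paragraph gives $|\partial X|\ge \mathrm{cond}_{d-1}(n)\cdot\Omega(d/n)\cdot I(X)\ge\mathrm{cond}_{d-1}(n)\cdot\Omega(1/n)\cdot\vol(X)$, i.e. the recursion $\mathrm{cond}_d(n)\ge\Omega(1/n)\cdot\mathrm{cond}_{d-1}(n)$, which unrolls to $\mathrm{cond}_d(n)\ge n^{-O(d)}$ against the base case of Theorem 5.

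I expect two obstacles. The lesser one is bookkeeping: making the volume comparisons between the $d$-, $(d-1)$- and $(d-2)$-dimensional arrangements exactly right, and cleanly handling at every level the degenerate case where one of the two pieces carries almost all of the volume (so that the inductive conductance bound gives nothing and one must recurse on a different hyperplane, or argue that $\partial X$ is large for a different reason). The more serious one is the \emph{quality} of the bound: the recursion above only yields $\mathrm{cond}_d(n)\ge n^{-O(d)}$, which is $1/\mathrm{poly}(n)$ for fixed $d$ but exponential in $d$, whereas the conjecture asks for $1/\mathrm{poly}(n,d)$. Closing that gap appears to need a genuinely global (non-inductive) isoperimetric inequality for chamber graphs — for instance one exploiting the fact that the chamber graph of a general-position arrangement is a partial cube (the tope graph of a uniform oriented matroid) with minimum degree at least $d$ — and establishing such an inequality with the correct polynomial dependence on $d$ is, I believe, the crux of the conjecture.
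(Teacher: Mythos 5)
First, be aware that the paper does not prove this statement: it is posed explicitly as a conjecture, and the appendix offers only partial evidence (a lower bound on the number of chambers cut out of the arrangement by a single additional hyperplane, and a cited conductance bound for the related \emph{arrangement graph} on vertices). So you are attempting something the authors leave open, and your proposal has to be judged on its own merits rather than against a proof in the paper.

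Your inductive strategy is reasonable and several of its ingredients are sound: the connectivity arguments from Theorem 5 indeed do not use $d=2$; restricting the arrangement to a hyperplane $P_i$ gives a general-position arrangement in $\mathbb{R}^{d-1}$ whose chambers are the $(d-1)$-faces on $P_i$; and your lifting step (a $(d-2)$-face $g$ of $\partial_{\mathcal{A}_i}X_i$ lies on $P_i\cap P_j$, and a case check on the four chambers around a generic point of $g$ produces a boundary face of $X$ on $P_j$ whose trace on $P_i$ is $g$, so distinct $g$ lift to distinct elements of $\partial X$) does check out. But there are two genuine gaps, both of which you flag yourself, and the second is fatal for the statement as written. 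First, the degenerate case: the inductive hypothesis yields nothing when $X_i$ occupies nearly all of $\mathcal{A}_i$, and it is not clear that choosing a different $P_i$ always escapes this --- you would need to rule out that every hyperplane carrying an $\Omega(1/n)$ share of $I(X)$ is almost entirely internal to $X$, and no argument is given. Second, even granting every step, the recursion $\mathrm{cond}_d(n)\ge\Omega(1/n)\cdot\mathrm{cond}_{d-1}(n)$ unrolls to $n^{-O(d)}$, which is $1/\mathrm{poly}(n)$ only for fixed $d$ and is exponentially small in $d$, whereas the conjecture demands $1/\mathrm{poly}(n,d)$. A completed version of your argument would therefore prove a strictly weaker claim (essentially the pseudo-polynomial regime the paper already works in elsewhere). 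For contrast, the paper's own partial evidence takes a different route: it controls only cuts induced by a single extra hyperplane, via Proposition 4, and leans on the $\Omega\bigl((n-d)/(n^3\log n)\bigr)$ conductance bound for the arrangement graph, which has the right joint polynomial dependence but does not handle arbitrary cuts of the chamber graph. Neither your approach nor theirs currently closes the conjecture; the missing piece in both is a genuinely global isoperimetric inequality for chamber graphs with polynomial dependence on $d$, exactly as you identify at the end.
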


\begin{remark}
  Note that the requirement for general position of the hyperplanes is
  necessary for fast mixing given by the Conjecture above. Else it is
  easy to construct a hyperplane arrangement with mixing time lower
  bounded by $O(\frac{n^d}{2^d})$. As shown in Figure 2, the cut made
  by the gray shaded top plane has only $4$ boundary chamber but the
  total number of chambers below the plane is roughly $n^2$ (in two
  dimensions, while in $d+1$ dimensions, we will have the cut and
  volumne to be $2^{d+1}$ and $O(n^d)$ respectively).
\begin{figure}[!b]
\includegraphics[width=0.5\textwidth]{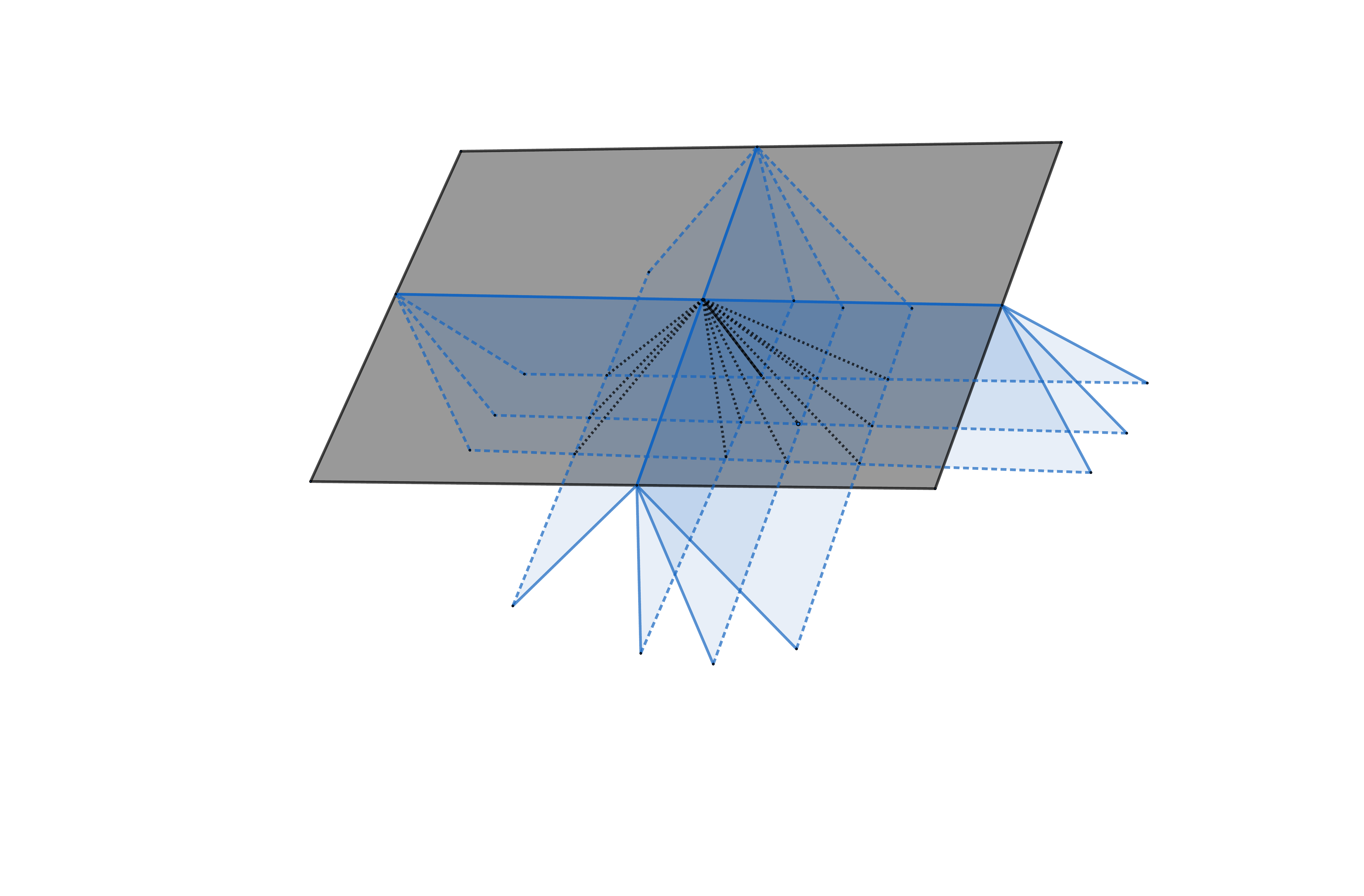}
\vspace{-.7in}
\caption{Hyperplane arrangement with small conductance}
\label{fig:slowmixing}
\end{figure}
\end{remark}

\paragraph{Lazy Chamber graph}
Algorithm NRW on the regular chamber graph will not give an
\emph{exact} uniform sampling, but is off by a factor of $d/n$ as
mentioned above. This is easily fixed by adding dummy vertices and
dummy edges to each vertex in the chamber graph raising the degree of
every vertex in the original chamber graph to $4n$. Call such a
graph to be \emph{lazy chamber graph}.

We will call the vertex in the original chamber graph to be
\emph{chamber vertex} and the dummy vertices added to be
\emph{augmentation} vertices. The stationary probability of the new
random walk, restricted on the chamber vertices, is exactly uniform.
If the Algorithm NRW on the chamber graph is fast mixing, we can
show that Algorithm NRW on the lazy chamber graph is also fast mixing:

\begin{lemma}
  If the conductance of the chamber graph is $g$, the lazy chamber
  graph has conductance $\ge \frac{g}{8n^2}$.
\end{lemma}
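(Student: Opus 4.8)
The plan is to bound the conductance of the lazy chamber graph directly from its definition: for every vertex set $A$ of the lazy graph with $\vol(A)\le\tfrac12\vol(V')$ I will exhibit the bound $|\partial A|/\vol(A)\ge g/(8n^2)$, by first normalizing $A$ and then invoking the conductance $g$ of the ordinary chamber graph. Here $V'$ denotes the lazy vertex set, $\vol$ is volume in the lazy graph, and $\vol_{\mathrm{ch}}$ is volume in the ordinary chamber graph. Recall the construction: a chamber vertex $v$ has ordinary degree $\delta_v$ with $d\le\delta_v\le n$ (Proposition 3), and is given $4n-\delta_v$ private augmentation vertices, each joined to $v$ by a single edge; thus every chamber vertex has lazy-degree exactly $4n$, every augmentation vertex has degree $1$, and $\vol(V')=4n|V|+\sum_v(4n-\delta_v)=8n|V|-\vol_{\mathrm{ch}}(V)$, so $7n|V|\le\vol(V')\le 8n|V|$, where $|V|$ is the number of chambers.

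The first real step is a local-move normalization. Given $A$, let $A_c=A\cap V$ be its chamber vertices. If some augmentation vertex $a\in A$ has its parent $v\notin A$, then deleting $a$ from $A$ lowers both $|\partial A|$ and $\vol(A)$ by exactly $1$; since $|\partial A|\le\vol(A)$ always holds, this does not increase the ratio and keeps the volume bound, so it suffices to treat sets $A=A_c\cup A_a$ in which every vertex of $A_a$ has its parent in $A_c$ (the case $A_c=\emptyset$ forces $A=\emptyset$ after normalization, and an un-normalized all-augmentation set has conductance $1$). For such $A$, let $D_c=\sum_{v\in A_c}(4n-\delta_v)$ be the number of augmentation vertices hanging off $A_c$, so $3n|A_c|\le D_c\le 4n|A_c|$, and set $\alpha=|A_a|\le D_c$. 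A direct edge count then gives
\[
|\partial A|=|\partial_{\mathrm{ch}}A_c|+(D_c-\alpha),\qquad \vol(A)=4n|A_c|+\alpha\le 8n|A_c|,
\]
where $\partial_{\mathrm{ch}}A_c$ is the cut of $A_c$ in the ordinary chamber graph.

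The analysis then splits on $\alpha$ versus $D_c/2$. If $\alpha\le D_c/2$, then $|\partial A|\ge D_c-\alpha\ge D_c/2\ge\tfrac{3n}{2}|A_c|$, and since $\vol(A)\le 8n|A_c|$ we get $|\partial A|/\vol(A)\ge 3/16\gg g/(8n^2)$. If $\alpha>D_c/2$, then $|\partial A|\ge|\partial_{\mathrm{ch}}A_c|$, and moreover $\vol(A)\ge\tfrac{11n}{2}|A_c|$, which with $\vol(A)\le\tfrac12\vol(V')\le 4n|V|$ forces $|A_c|<\tfrac{8}{11}|V|$, hence $|V\setminus A_c|>\tfrac{3}{11}|V|$. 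Now I invoke $g$: if $\vol_{\mathrm{ch}}(A_c)\le\tfrac12\vol_{\mathrm{ch}}(V)$ then $|\partial_{\mathrm{ch}}A_c|\ge g\,\vol_{\mathrm{ch}}(A_c)\ge g\,d\,|A_c|\ge g|A_c|$, so $|\partial A|/\vol(A)\ge g/(8n)$; otherwise $\vol_{\mathrm{ch}}(V\setminus A_c)\le\tfrac12\vol_{\mathrm{ch}}(V)$ and, since the cut is symmetric, $|\partial_{\mathrm{ch}}A_c|=|\partial_{\mathrm{ch}}(V\setminus A_c)|\ge g\,\vol_{\mathrm{ch}}(V\setminus A_c)\ge g\,|V\setminus A_c|>\tfrac{3}{11}g|V|$, while $\vol(A)\le 8n|V|$, so $|\partial A|/\vol(A)>\tfrac{3g}{88n}$. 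In every case the ratio is $\Omega(g/n)$, hence $\ge g/(8n^2)$ for $n\ge 4$; the handful of small cases $n\le 3$ (where the chamber graph has only a few vertices) are checked by hand.

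The one delicate point I expect is the passage to the complement in the last sub-case: when $A_c$ carries more than half the ordinary chamber-graph volume, $g$ cannot be applied to $A_c$ directly, so one must ``spend'' the half-volume constraint on $A$ — which, because $\alpha>D_c/2$, makes $\vol(A)$ large relative to $|A_c|$ — to conclude that $V\setminus A_c$ still retains a constant fraction of the chamber-graph volume; only then does replacing $A_c$ by its complement cost a mere constant factor. Everything else is bookkeeping, with the comfortable gap between the true $\Omega(g/n)$ bound and the claimed $g/(8n^2)$ absorbing all the constants.
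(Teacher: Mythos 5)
Your proof is correct and follows essentially the same strategy as the paper's: normalize $A$ so that every augmentation vertex present has its parent chamber vertex present, note that missing augmentation edges already feed the cut, and otherwise transfer the bound to the chamber-vertex part via the chamber graph's conductance, passing to the complement when $A_c$ carries more than half the chamber-graph volume. Your bookkeeping (working with volumes rather than cardinalities, the explicit $\alpha$ versus $D_c/2$ split, and the symmetric-cut complement step) is in fact more careful than the paper's own writeup; the only loose end is the deferred finite check for $n\le 3$, which is a harmless, trivially completable verification.
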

\begin{proof}
We only need to show that any subset $A$ of vertex in the lazy chamber
graph we have
$\frac{|\partial A|}{|A|}\ge \frac{g}{8n}$.
We observe that if an augmentation vertex is in $A$, then the chamber
vertex attached to it must also be included in $A$.  We denote
$A'\subset A$ to be the set of all chamber vertexes in $A$. 

The vertexes in $A'$ can be partitioned into two classes,
$A'=B'\cup C'$ where $B'$ is the set of all chamber vertices that have
all their attached augmentation vertices in $A$ and $C'$ is the
complement of $B'$ in $A'$.
Similarly, $B$, $C$ to be the sets that contains
also the attached new vertex of $B'$ and $C'$ in $A$). We have
$$\frac{|\partial{A'}|}{|A'|}=\frac{|\partial B'|+|\partial C'|}{|B'|+|C'|}\ge \frac{|\partial B'|}{|B'|},$$
since all vertexes in $C'$ are boundary vertexes. Note that
$3n*|B'|\le |A| \le \frac{4n*|V|}{2}$ since any vertex in $B'$ will
attach at least $3n$ new vertex in order to make degree $4n$, we have
$|B'|\le \frac{2}{3}|V|$. Now, by the definition of conductance we
have $\frac{|\partial B'|}{|B'|}\ge g/2$. This is because, if
$|B'|\le \frac{|V|}{2}$ then $\frac{|\partial B'|}{|B'|}\ge
g$. Otherwise, we have
$\frac{1}{3}|V|\le A'\backslash B'|\le \frac{1}{2}|V|$, thus
$|\partial B'|\ge g|A'\backslash B'|\ge \frac{1}{3}g|V|$ and
$|B'|\le \frac{2}{3}|V|$, we have
$\frac{|\partial B'|}{|B'|}\ge \frac{g}{2}$. Therefore, we have
$$\frac{|\partial A|}{|A|}\ge \frac{|\partial B'|+|\partial C|}{|B|+|C|}\ge \frac{|\partial B'|}{|B|}\ge \frac{|\partial B'|}{4n*|B'|}\ge \frac{g}{8n}.$$
Now since $\text{vol}|A|\le n|A|$, the theorem follows.
\end{proof}

Combining all the results, we have
\begin{theorem}
  Assuming conjecture 1. For an given parameter $\epsilon>0$ and $X$ in the general position,
  Algorithm NRW run on the lazy chamber graph generated by $S_X$ can
  generate labels from $S_X$ with distribution $\epsilon$ close (in
  variational distance) to uniform, and runs in time
  $\text{poly}(d,n,\log(1/\epsilon))$.
\end{theorem}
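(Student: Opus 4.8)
The plan is to assemble the claim from three ingredients, two of which are already in hand: (a) the stationary law of the walk, conditioned on chamber vertices, is exactly uniform over $S_X$; (b) under Conjecture~1, Lemma~1 together with Cheeger's inequality give the walk a spectral gap $1/\mathrm{poly}(n,d)$, so a walk of length $T=\mathrm{poly}(n,d,\log(1/\epsilon))$ is close to stationary; and (c) each step of Algorithm~NRW runs in $\mathrm{poly}(n,d)$ time. For (a), note that NRW on the lazy chamber graph is the simple (hence reversible) random walk on a connected graph, so its stationary distribution $\pi$ is proportional to degree; by construction every chamber vertex has degree exactly $4n$ and every augmentation vertex has degree $1$, so $\pi$ is constant on the set $B$ of chamber vertices, and — identifying chambers with labelings via Theorem~1 — $\pi(\cdot\mid B)$ is uniform over $S_X$. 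Moreover, writing $\delta_c$ for the chamber-graph degree of chamber $c$ (so $\delta_c\le n$ by Proposition~3), we have $\pi(B)=\frac{4n\,|S_X|}{4n\,|S_X|+\sum_c(4n-\delta_c)}\ge\frac12$, since $0\le 4n-\delta_c\le 4n$. The sampler therefore runs NRW, reports the chamber $c_T=\mathrm{Chamber}(\mathbf{w}_T)$ if $\mathbf{w}_T$ landed on a chamber vertex, and restarts with fresh randomness otherwise; the expected number of restarts is $O(1)$, and the output law equals $\mathrm{Law}(\mathbf{w}_T)$ conditioned on $B$.

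Next I would bound the distance of $\mathrm{Law}(\mathbf{w}_T)$ from $\pi$. The chamber graph is connected with degrees at most $n$ (Proposition~3), and by Lemma~1 the lazy chamber graph has conductance $\Phi\ge g/(8n^2)$, where under Conjecture~1 we have $g\ge 1/\mathrm{poly}(n,d)$; hence $\Phi\ge 1/\mathrm{poly}(n,d)$. Since chamber graphs of arrangements are bipartite (crossing a facet flips exactly one sign), the walk should be made lazy in the standard way (self-loop probability $\tfrac12$), which only halves the relevant spectral gap; Cheeger's inequality (\cite[Theorem 13.14]{levin2017markov}) then gives an absolute spectral gap $\gamma\ge \Phi^2/8\ge 1/\mathrm{poly}(n,d)$. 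The standard spectral mixing bound gives
\[
  \big\|\mathrm{Law}(\mathbf{w}_T)-\pi\big\|_{\mathrm{TV}}\ \le\ \frac{(1-\gamma)^{T}}{2\sqrt{\pi_{\min}}},
\]
where $\pi_{\min}\ge \bigl(8n\,|S_X|\bigr)^{-1}$ and, by Proposition~1, $|S_X|=2\sum_{i=0}^{d}\binom{n-1}{i}=O(n^{d})$, so that $\log(1/\pi_{\min})=O(d\log n)$. Choosing $T=\bigl\lceil \gamma^{-1}\log\!\bigl(1/(\epsilon'\sqrt{\pi_{\min}})\bigr)\bigr\rceil=\mathrm{poly}(n,d,\log(1/\epsilon'))$ makes $\mathrm{Law}(\mathbf{w}_T)$ within $\epsilon'$ of $\pi$; conditioning a near-stationary law on the set $B$, whose mass is bounded below by a constant, inflates total variation by only a constant factor, so taking $\epsilon'=c\epsilon$ for a suitable constant $c$ yields an output within $\epsilon$ of uniform on $S_X$.

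Finally I would check the running time. The initialization $\mathbf{w}_0=\mathrm{RS}(\mathbf{v}_1,\ldots,\mathbf{v}_n)$ costs $O(n(d+1)^3)$ by Theorem~2. Each step of NRW requires: one call to Algorithm~Chamber to list the facets of the current chamber, which is $\mathrm{poly}(n,d)$ by Theorem~3; a uniform choice among the $4n$ incident edges of the lazy chamber graph; and, when a genuine facet $P_i$ is chosen, one linear program (flip the sign of $\mathbf{v}_i$, keep the others) to produce a point in the adjacent chamber, again $\mathrm{poly}(n,d)$. So the $T$ steps together cost $T\cdot\mathrm{poly}(n,d)=\mathrm{poly}(d,n,\log(1/\epsilon))$, and the $O(1)$ expected restarts leave this bound unchanged.

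The genuinely hard content — a polynomial lower bound on the conductance of a general-position chamber graph in every dimension — is precisely Conjecture~1, which we assume; absent it, the same conclusion survives only for $d=2$ via Theorem~4. What remains, and is a matter of care rather than ideas, is the bookkeeping of the second paragraph: verifying that restricting a near-stationary law to chamber vertices costs only a constant factor in total variation (this is where $\pi(B)\ge\frac12$, hence Proposition~3, is used), and that $\log(1/\pi_{\min})$ stays $O(d\log n)$ rather than merely "finite" — which is exactly where Proposition~1's count of the number of chambers enters.
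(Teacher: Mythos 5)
Your proposal follows essentially the same route as the paper: Lemma~1 plus Conjecture~1 lower-bound the conductance of the lazy chamber graph, Cheeger's inequality converts that to a spectral gap, and the standard $t_{\mathrm{mix}}(\epsilon)\le \gamma^{-1}\log(1/(\epsilon\,\pi_{\min}))$ bound with $\pi_{\min}^{-1}=O(n\,|S_X|)=O(n^{d+1})$ gives the polynomial run length; the paper's proof is a two-line version of exactly this. You do supply two details the paper elides, and both are worth having: the chamber graph is a subgraph of the hypercube on sign vectors and hence bipartite, so the walk must be made lazy (self-loops) before any spectral convergence bound applies — the paper never addresses periodicity — and the conditioning on chamber vertices costs only a constant factor because $\pi(B)\ge\tfrac12$. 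Your version of the mixing bound also has the $n^d$ in the correct place inside the logarithm, where the paper's displayed formula appears to have it inverted.
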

\begin{proof}
By the relationship between mixing time and spectral gap \cite[Theorem 2.2]{berestycki2016mixing}, we have
$$t_{\text{mix}}(\epsilon)\le \frac{1}{g}\log\left(\frac{1}{2\epsilon n^d}\right).$$
The theorem follows since the spectral gap is lower bounded by square
of conductance by Cheeger's inequality \cite[Theorem 13.14]{levin2017markov}.
\end{proof}

\subsection{Sampling for arbitrary neural networks}
\label{s:an}
We now consider the sampling for arbitrary neural networks. Let
$X=\{\textbf{x}_1,\cdots,\textbf{x}_n\}$ be the samples, we choose the
weights of the network layer by layer. At layer $\ell$ we use the
previous sampled weights in layers $1,\cdots,\ell-1$ to generate
outputs $\textbf{x}_1^{\ell},\cdots,\textbf{x}_n^{\ell}$, where
$\textbf{x}_i^{\ell}$ is output of layer $\ell-1$ with input
$\textbf{x}_i$, a binary vector. For each neuron in layer $\ell$ we
\emph{independently} sample weights using Algorithm RS with input
$\{(1,\textbf{x}_{1}^{\ell}),\cdots,(1,\textbf{x}_{n}^{\ell})\}$.

To illustrate the idea more concretely, consider neural networks with
one hidden layer. Let $X$ to be the input samples of dimension $d$,
for each neuron in the hidden layer, we use Algorithm RS to generate
the weights \emph{independently}. We now fix the weights we sampled
for the neuron in the hidden layer and view the function that
expressed by the hidden layer to be some function
$h:=\mathbb{R}^d\rightarrow \{0,1\}^{u_2}$, where $u_2$ is the number
of neurons in the hidden layer. We now define
$\textbf{x}_i'=h(\textbf{x}_i)$ to be the new input sample for the
output layer, and again use Algorithm RS to sample the weights for the
output neuron with input $X'$.
\begin{theorem}
  For a neural network with fixed architecture, $k$ neurons and $W$
  parameters, the above sampling procedure runs in $O(nW^3)$
  time. Given a sample $X$, each labeling in $S_X$ produced by this
  architecture appears with probability at least
$\left(\frac{W}{2enk}\right)^W.$
\end{theorem}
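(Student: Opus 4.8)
The plan is to dispatch the running-time and probability claims separately. Index the neurons $1,\dots,k$ and let $w_j$ be the number of parameters of neuron $j$ (its incoming weights together with its threshold), so $\sum_{j=1}^{k}w_j=W$. Neuron $j$ is handled by a single call to Algorithm RS on the $n$ vectors $(1,\x_i^{\ell})\in\reals^{w_j}$ formed from the previous layer's outputs; by Theorem 2 this costs $O(n w_j^{3})$. The forward evaluations needed to produce the layer outputs $\x_i^{\ell}$ cost only $O(nW)$ in total and are dominated, so summing over neurons and using $\sum_j w_j^{3}\le\Paren{\sum_j w_j}^{3}=W^{3}$ gives the claimed $O(nW^{3})$.

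For the probability bound I would fix a target labeling $v\in S_X$ and a witness $\W^{\ast}\in\reals^{W}$ with $f_{\W^{\ast}}\equiv v$ on $X$, then first replace $\W^{\ast}$ by a nearby $\W'$ in which no neuron has a vanishing pre-activation on any of the $n$ relevant inputs. Proceeding through the neurons in topological order and nudging each zero pre-activation to be strictly positive is legitimate (since $\sigma(0)=1$) and leaves every neuron's output vector on $X$ unchanged, so still $f_{\W'}\equiv v$. This places the parameter vector of each neuron $j$ in the interior of a non-empty chamber $c_j^{\ast}$ of the arrangement in $\reals^{w_j}$ formed by the hyperplanes $(1,\x_i^{\ell})^{T}\w=0$, and by Theorem 1 every point of $c_j^{\ast}$ reproduces, on those $n$ inputs, the per-neuron labeling of $\W'_j$.

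Next I would let $G_j$ be the event that the independent RS calls for neurons $1,\dots,j$ all land in $c_1^{\ast},\dots,c_j^{\ast}$, and condition along the topological order. Under $G_{j-1}$ the inputs actually fed to neuron $j$ by the procedure coincide with those fed by $\W'$, so the arrangement faced by neuron $j$ is exactly the one defining $c_j^{\ast}$; since neuron $j$'s weights come from an RS call independent of all earlier calls, Theorem 2 — together with the remark after it handling rank-deficient inputs, where a smaller rank only enlarges the bound (using that $x\mapsto(x/2en)^{x}$ is decreasing for $x<2n$) — gives $\Pr[G_j\mid G_{j-1}]\ge\Paren{w_j/2en}^{w_j}$. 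Telescoping, and observing that $G_k$ forces the output neuron, hence the whole labeling, to equal $v$,
\[
  \Pr[\text{procedure outputs } v]\;\ge\;\prod_{j=1}^{k}\Paren{\frac{w_j}{2en}}^{w_j}.
\]
To finish, taking logarithms reduces the desired $\prod_j(w_j/2en)^{w_j}\ge(W/2enk)^{W}$ to $\sum_j w_j\log w_j\ge W\log(W/k)$, which is Jensen's inequality for the convex map $x\mapsto x\log x$ at $w_1,\dots,w_k$.

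The step I expect to be the main obstacle is the conditioning argument: making precise that conditioning on $G_{j-1}$ does not distort the law of neuron $j$'s RS output — this is where the independence of the per-neuron sampling is essential — while it does pin down neuron $j$'s input samples to be exactly those used to define $c_j^{\ast}$, so that Theorem 2 applies with the correct hyperplane arrangement. The related boundary subtlety — RS outputs only interior points, so a witness $\W^{\ast}$ lying on a facet must first be moved into a chamber without changing $v$ — is handled by the perturbation above, but must be argued carefully layer by layer for multilayer networks.
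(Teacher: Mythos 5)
Your proof is correct and follows essentially the same route as the paper: a per-neuron application of Theorem 2, independence of the RS calls conditioned layer by layer, and a convexity (Jensen) step to combine $\prod_j(w_j/2en)^{w_j}$ into $(W/2enk)^W$, which is exactly the "same argument as in bounding the VC dimension" that the paper invokes. You additionally spell out details the paper leaves implicit — the boundary perturbation of the witness weights, the monotonicity of $x\mapsto(x/2en)^x$ handling rank deficiency, and the explicit conditioning argument — but these are refinements of the same proof, not a different approach.
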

\begin{proof}
We use induction on the layers. For any given labeling produced by
weights $\textbf{w}$, let $p(\ell)$ to be the probability that the
output of layer $\ell$ is consistent with the output on weight
$\textbf{w}$. We have
$$p(\ell)\ge p(\ell-1)\prod_{i=1}^{u_{\ell}}\left(\frac{d_i}{2e n}\right)^{d_i},$$
where $d_i$ is the input dimension of the $i$th neuron in layer
$\ell$, and the product term comes from Theorem 2 and
independence. Note that the rank of the outputs $X^{\ell}$ may reduced
after passing the previous layers, however, this will only make the
probability larger than $\left(\frac{d_i}{2e n}\right)^{d_i}$ by
Theorem 2.  Now, the theorem follows with the same argument as
in~\cite[Theorem 6.1]{anthony2009neural} for bounding VC dimension of
linear threshold neural networks.
\end{proof}

\ignore{\section{Simulations}
We run our recursive algorithm for randomly choosing samples with different dimension $d\in \{3,4,5,6,7,8,9,10\}$ and size $n\in\{10,11,12,13,14,15\}$, for each pair of $d,n$ we run the sampling procedure $30000$ times to count the empirical distribution on the different labeling. We then compute the ratio of the maximum and minimal probability that appears in the balling that we sampled, and rounding the ratio to be integers. One can see from figure 2 that, for each  sample size $n$ there is a peak for the probability ratio when the dimension of sample increases. For given dimension $d$, one can see that for $d$ is small the ratio will increase according the increasing of $n$, for $d$ is large the ratio will decrease when the $n$ is increasing. We also runs our sampling procedure on MNIST data set with $1000$ sample, the run time is around $5$ mins.
\begin{figure}[b]
\centering
\includegraphics[width=.6\textwidth]{ratio.pdf}
\caption{Ratio of maximum and minimal empirical probability}
\label{fig:slowmixing}
\end{figure}}


\medskip

\small

\bibliographystyle{IEEEtran}
\bibliography{CISS20}

\appendix
\section{On the conductance conjecture}
\label{justification}
In order to provide a convincing reason as to why we believe the
Conjecture 1, we provide a proof of the following partial result:
suppose we cut a general position hyperplane arrangement with another
hyperplane. The conductance of such a cut is lower bounded by $\Omega(1/n^2)$
(no matter the number of dimensions).

\begin{proposition}
  Let $Q_1,\cdots,Q_n$ be a general position hyperplane arrangement in
  dimension $d$. $P$ is another hyperplane. Then the number of
  chambers in $Q_1\cap P,\cdots,Q_n\cap P$ (viewed as a hyperplane
  arrangment in $P$) is lower bounded by
$$\frac{1}{n}\binom{n-1}{d-1}.$$
\end{proposition}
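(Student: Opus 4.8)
The plan is to recast the count in terms of the ambient arrangement and then run a deletion--restriction recursion on the number of hyperplanes. Write $\mathcal{Q}=\{Q_1,\dots,Q_n\}$ and, for any arrangement $\mathcal{A}$, let $r(\mathcal{A})$ be its number of chambers. The first observation is that $N:=r(\{Q_i\cap P\})$, the quantity to be bounded, equals the number of chambers of $\mathcal{Q}$ in $\mathbb{R}^d$ that $P$ meets: the open convex sets $C\cap P$, as $C$ ranges over the chambers of $\mathcal{Q}$, are precisely the chambers of $\{Q_i\cap P\}$ in $P$. We may assume $P\notin\mathcal{Q}$ (otherwise the bound is immediate) and also $d\ge 3$ and $n>d$, since otherwise $\tfrac1n\binom{n-1}{d-1}<1\le N$. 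The argument is by induction on $n$, invoking the standard deletion--restriction identity $r(\mathcal{A})=r(\mathcal{A}\setminus H)+r(\mathcal{A}^H)$ for a non-repeated $H\in\mathcal{A}$, where $\mathcal{A}^H$ is the induced arrangement on $H$ (cf.~\cite[Chapter 2]{stanley2004introduction}).

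For the inductive step, suppose some $Q_i\cap P$ is not repeated among $Q_1\cap P,\dots,Q_n\cap P$, and apply deletion--restriction inside $P$ with $H=Q_i\cap P$. The deletion term $r(\{Q_j\cap P:j\neq i\})$ is the number of chambers of the general-position arrangement $\{Q_j:j\neq i\}$ met by $P$, hence at least $\tfrac1{n-1}\binom{n-2}{d-1}$ by the induction hypothesis. The restriction term is the number of regions of $\{Q_i\cap Q_j\cap P:j\neq i\}$ inside the $(d-2)$-plane $Q_i\cap P$; reading this as the chambers of the (again general-position) arrangement $\{Q_i\cap Q_j:j\neq i\}$ in $Q_i\cong\mathbb{R}^{d-1}$ met by the hyperplane $Q_i\cap P$ --- which is distinct from every $Q_i\cap Q_j$ precisely because $Q_i\cap P$ is non-repeated --- the induction hypothesis in dimension $d-1$ gives at least $\tfrac1{n-1}\binom{n-2}{d-2}$. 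Summing and using Pascal's rule, $N\ge\tfrac1{n-1}\binom{n-1}{d-1}\ge\tfrac1n\binom{n-1}{d-1}$.

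The delicate point --- the step I expect to be the main obstacle, since $P$ is adversarial and may be chosen after the $Q_i$ --- is to guarantee that some $Q_i\cap P$ is non-repeated, or else to handle the exceptional configuration by hand. Writing $q_i$ for a normal of $Q_i$, one has $Q_i\cap P=Q_j\cap P$ iff the normal of $P$ lies in $\mathrm{span}(q_i,q_j)$, and ``having equal restriction'' is an equivalence relation on $\{1,\dots,n\}$. General position forbids three normals lying in one $2$-plane when $d\ge 3$, so all classes have size $\le 2$; if no restriction is non-repeated the classes form a perfect matching, so $n$ is even. For $d\ge 4$, two disjoint coincident pairs $\{i,j\},\{k,l\}$ would force the normal of $P$ into $\mathrm{span}(q_i,q_j)\cap\mathrm{span}(q_k,q_l)=\{0\}$, which is impossible; hence for $d\ge 4$ and $n\ge 3$ a non-repeated restriction always exists. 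The sole genuinely exceptional case is $d=3$ with a perfect matching: there $\{Q_i\cap P\}$ consists of exactly $n/2$ distinct lines through the origin in the plane $P$, so $N=n$, and $n\ge\tfrac1n\binom{n-1}{2}$ is immediate. The base cases $n\le d$ and $d\le 2$ are covered by $N\ge 1>\tfrac1n\binom{n-1}{d-1}$. With these cases settled the deletion--restriction recursion closes the proof; it is the coincidence bookkeeping, not the recursion, that carries all the weight.
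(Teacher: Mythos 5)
Your proof is correct, and it follows a genuinely more explicit route than the paper's. The paper's argument introduces the notion of an \emph{almost full rank position} arrangement (any $k\le d$ of the hyperplanes have rank at least $k-1$), verifies that this property survives restriction to one of the hyperplanes, and then asserts the chamber count ``by induction'' without ever carrying out the counting recursion or resolving what happens when two restricted hyperplanes coincide (it simply declares that coincident planes are ``treated as different,'' which does not by itself produce extra chambers). You instead identify the quantity with the number of chambers of $\{Q_i\}$ that $P$ meets and run deletion--restriction inside $P$, telescoping with Pascal's rule to obtain $\tfrac{1}{n-1}\binom{n-1}{d-1}$, slightly stronger than the stated bound; the inductive calls are legitimate because a sub-arrangement and the restriction $\{Q_i\cap Q_j\}_{j\ne i}$ in $Q_i\cong\mathbb{R}^{d-1}$ are again in general position. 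The heart of your argument---classifying coincidences $Q_i\cap P=Q_j\cap P$ by the condition that the normal of $P$ lies in $\mathrm{span}(q_i,q_j)$, deducing that coincidence classes have size at most $2$, ruling out a perfect matching for $d\ge 4$, and computing the $d=3$ perfect-matching configuration directly---is precisely the degeneracy bookkeeping that the paper's ``almost full rank'' device is meant to absorb, but you make it airtight where the paper leaves it as a sketch. The one step worth double-checking in your write-up is the claim that non-repeatedness of $Q_i\cap P$ guarantees $Q_i\cap P\ne Q_i\cap Q_j$ for all $j$ (so that the restriction hyperplane is genuinely ``another hyperplane'' of the induced arrangement); this does hold, since $Q_i\cap P=Q_i\cap Q_j$ would force $Q_j\cap P=Q_i\cap P$, but it deserves the one-line justification. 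Net effect: your proof is complete and self-contained, whereas the paper's is an outline; both are inductions on $n$ that must control how restrictions degenerate, so the approaches are cousins rather than strangers.
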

\begin{proof}
  A set of hyperplane in dimension $d$ is said to be in \emph{almost
    full rank position} if any $k\le d$ planes has rank at least
  $k-1$. Note that the hyperplanes $Q_1\cap P,\cdots, Q_n\cap P$ are
  in \emph{almost full rank position}, since the projection on to $P$
  can only reduce the rank by $1$. Note that there may be two
  $Q_i\cap P$ and $Q_j\cap P$ coincident, but we treat them as
  different planes. 

  Denote $Q_i'=Q_i\cap P$, we show that the intersection
  $\{Q_j'\cap Q_1'\}$ for $j\not=1$ is also in \emph{almost full rank
    position}. We only need to show that any $k\le d-2$ planes has
  rank at least $k-1$. 

  Suppose not, w.l.o.g.  $Q_2'\cap Q_1',\cdot,Q_{k+1}'\cap Q_1'$ has
  rank at most $k-2$ and $k\le d-2$. 

  But we will show that $Q_1',\cdots,Q_{k+1}'$ will have rank $k-1$,
  thus obtaining a contradiction. To see this, let $B$ be the base of
  the linear space $Q_1'$, $v_i$ a normal vector to $Q_i'$.
  We have $Bv_1=0$ and
  $\lambda_2 Bv_2+\cdots +\lambda_{k+1}Bv_{k+1}=0$, implying that
  $B(\lambda_2v_2+\cdots+\lambda_{k+1}v_{k+1})=0$, which in turn implies
  $\lambda_2v_2+,\cdots+\lambda_{k+1}v_{k+1}=\lambda_1 v_1$
  for two different set of $\lambda_i$s. Meaning that
  $v_1,\cdots,v_{k+1}$ has rank of $k-1$. 

  The proposition now follows by induction.
\end{proof}
\noindent 

\paragraph{A random walk on vertexes:} For hyperplane arrangement
$Q_1,\cdots,Q_n$ that is in the general position. We define the
vertexes of the arrangement to be all the intersections
$v_I=\bigcap_{i\in I}Q_i$ with $I\subset [n]$ and $|I|=d$. By the
general position assumption, we know that there are exactly
$\binom{n}{d}$ many vertexes. Two vertexes $v_I,v_{I'}$ is said to be
connected if they are connected by a $1$-dimensional face
(intersection by $d-1$ hyperplanes) of the hyperplane
arrangement. There are $(n-d)\binom{n}{d-1}$ many edges and each
vertex adjacent to at most $2d$ and at least $d$ edges. The graph that
defined by the vertexes and edges is known as the \emph{arrangement
  graph} and studied in~\cite{koltun2005arrangement}. Where the author
obtained the following conductance bound using a coupling argument:
\begin{theorem}[{\cite[Theorem 4.3]{koltun2005arrangement}}]
The conductance of the arrangement graph is lower bounded by
$$\Omega\left(\frac{n-d}{n^3\log n}\right).$$
\end{theorem}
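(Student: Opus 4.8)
The plan is to control the mixing time of the natural random walk on the arrangement graph and then extract the conductance bound from it. First I would pin down the walk: a vertex $v_I$ lies on the $d$ hyperplanes indexed by $I$, and the $d$ lines $\bigcap_{j\in I\setminus\{i\}}Q_j$ through it each carry $n-d+1$ proper intersection points in a linear order, so $v_I$ has between $d$ and $2d$ neighbours (two per line when $v_I$ is interior to it, one when it is an end). Taking the ``pick a line uniformly, then a direction uniformly, hold if the chosen direction runs to infinity'' walk makes each neighbour get probability exactly $1/(2d)$ and is reversible with uniform stationary distribution (equivalently one may take the simple random walk, whose stationary measure is within a factor $2$ of uniform by the degree bounds noted above); either way it suffices to bound the mixing time. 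It is convenient to encode $v_I$ as the pair $(I,\sigma)$, $\sigma\in\{\pm\}^{[n]\setminus I}$ recording the side of each non-incident hyperplane; a step then releases one index $i\in I$, inserts the index $k$ of an adjacent crossing point along the chosen line, leaves every other sign untouched (nothing is crossed between consecutive points), and fixes $\sigma_i$ from the geometry --- so dynamically the walk behaves like a geometrically constrained interchange process on $[n]$.

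Next I would build a coupling of two copies $(I_t,\sigma_t)$ and $(J_t,\tau_t)$ that drives a potential --- say $|I_t\triangle J_t|$ together with a count of sign disagreements --- to zero. The skeleton is the usual one: when both chains pick the same line and the same move is available, mirror them; when only one can move, or the orders of crossing points along the two relevant lines disagree, pair the two steps so the potential does not increase in expectation, using that restricted to a single line the walk is an end-reflected walk on a path of length $\le n-d$, which contracts distance on that line in $O((n-d)^2)$ steps. Aggregating the $O(n^2)$ cost of aligning one ``coordinate'' over the $O(n)$ coordinates that must be brought into agreement, with an extra $\log n$ coupon-collector factor for the last few, should give $t_{\mathrm{mix}} = O\!\big(\tfrac{n^3\log n}{\,n-d\,}\big)$ --- the $(n-d)$ in the denominator reflecting that, with $d$ lines through a vertex and $\sim n-d$ reachable points on each, a step that makes useful progress is selected with probability $\Theta\!\big(\tfrac{n-d}{n}\big)$.

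The main obstacle is exactly the geometric sparsity of the arrangement graph: it sits inside the Johnson graph $J(n,d)$, but two vertices differing in a single index need not be adjacent --- along the line joining them they can be $\Theta(n-d)$ apart --- and the neighbours ``at infinity'' break the clean transposition symmetry that makes the Johnson-graph coupling trivial, so the coupling must stay productive even when a direct swap is unavailable; this is where the extra powers of $n$ beyond the $O(d\log n)$ Johnson mixing time come from, and a Diaconis--Saloff-Coste comparison of this chain against the Johnson-graph walk --- realising each Johnson edge by an arrangement path of length $\le n-d$ --- is the fallback I would keep in reserve. Finally, from $t_{\mathrm{mix}}\le T$ the relaxation-time inequality $t_{\mathrm{mix}}\ge (t_{\mathrm{rel}}-1)\log 2$ gives spectral gap $\gamma\ge c/T$, and the easy direction of Cheeger's inequality, $\Phi\ge\gamma/2$, yields the stated conductance lower bound $\Omega\!\big(\tfrac{n-d}{n^3\log n}\big)$.
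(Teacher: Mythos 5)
You should first be aware that the paper itself offers no proof of this statement: it is imported verbatim as \cite[Theorem 4.3]{koltun2005arrangement}, with only the remark that Koltun obtained it ``using a coupling argument.'' So your overall strategy --- couple two copies of the walk, bound the mixing time by $O\bigl(\tfrac{n^3\log n}{n-d}\bigr)$, pass to the relaxation time via $(t_{\mathrm{rel}}-1)\log 2\le t_{\mathrm{mix}}$, and then use the easy direction $\gamma\le 2\Phi$ of Cheeger's inequality --- matches the cited source in outline, and the back end of your argument (the setup of the walk, the degree count $d$ to $2d$, the $n-d+1$ crossing points per line, and the chain of inequalities converting a mixing-time bound into a conductance bound) is correct as stated.

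The genuine gap is that the coupling itself, which is the entire content of the theorem, is asserted rather than constructed. The step ``pair the two steps so the potential does not increase in expectation'' is precisely what must be proved, and it is not automatic: when the two chains at $v_I$ and $v_J$ release the same index $i$, the replacement indices $k$ and $k'$ are determined by which crossing points are \emph{adjacent} along the two (different) lines, so they generally differ, and $|I_t\triangle J_t|$ can increase; your proposal gives no mechanism guaranteeing a compensating contraction, and the ``end-reflected walk on a path'' picture only controls motion of one chain along one fixed line, not the relative position of two chains on two different lines. A smaller conceptual slip: the sign vector $\sigma$ in your encoding $(I,\sigma)$ is redundant, since in general position $v_I=\bigcap_{i\in I}Q_i$ is a single point and $\sigma$ is a function of $I$; the state space is exactly $\binom{[n]}{d}$, and the difficulty lives entirely in the sparse, geometry-dependent adjacency inside the Johnson graph. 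Your fallback of a Diaconis--Saloff-Coste comparison against the Johnson-graph walk is a reasonable alternative, but it too requires exhibiting, for each Johnson edge, a canonical arrangement path of length $O(n-d)$ with controlled congestion, which is again a nontrivial geometric lemma you have not supplied. As written, the proposal is a plausible plan, not a proof.
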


Note that theorem $1$ will implies conjecture $1$ if we also know that
the number of vertexes in any cut of the chamber do not much greater
than the number of faces. Proposition 1 shows that this is satisfied
if the cut is a plane, since there are exactly $\binom{n}{d}$ vertices
but at least $\frac1n\binom{n-1}{d-1}$ faces). 

\end{document}